\documentclass[letterpaper,envcountsame]{llncs}  

\usepackage{amsmath,amssymb}
\usepackage{mathtools}
\usepackage{xspace}
\usepackage{macros}
\usepackage{url}
\usepackage{algorithm}
\usepackage[noend]{algorithmic}
\usepackage{nicefrac}
\usepackage{cite}

\usepackage{thmtools}
\usepackage{thm-restate}

\newcommand{\plalc}{\ensuremath{\mathcal{ALCP}}\xspace}
\newcommand{\gci}[1]{\ensuremath{\left<#1\right>}\xspace}
\newcommand{\Texa}{\ensuremath{{\Tmc_{\mathsf{exa}}}}\xspace}

\newcommand{\Kexa}{\ensuremath{{\Kmc_{\mathsf{exa}}}}\xspace}

\newcommand{\logic}{\Lmc}

\newcommand{\probcons}{\Rmc}

\newcommand{\belief}{\Bmc} 
\newcommand{\credbelief}{\Bmc^\csf}
\newcommand{\scepbelief}{\Bmc^\ssf} 
\newcommand{\logicInts}{\ensuremath{\textit{Int}(\logic)}\xspace} 
\newcommand{\logicIntsi}{\ensuremath{\textit{Int}(\logic_i)}\xspace}
\newcommand{\logicIntsp}[1]{\ensuremath{\textit{Int}(\logic_i)}\xspace}

\newcommand{\logicInt}{\ensuremath{\textit{v}}\xspace}
\newcommand{\altlogicInt}{\ensuremath{\textit{w}}\xspace}
\newcommand{\meP}{\mePlong}
\newcommand{\mePlong}{\ensuremath{P^\textit{ME}_{\probcons}}\xspace}
\newcommand{\mePi}{\ensuremath{P^\textit{ME}_{i}}\xspace}
\newcommand{\mePp}[1]{\ensuremath{P^\textit{ME}_{#1}}\xspace}
\newcommand{\Pinduced}{\ensuremath{P^{\Pmc}}\xspace}

\newcommand{\Pinducedp}[1]{\ensuremath{P^{#1}}\xspace}
\newcommand{\propSub}{\ensuremath{\textit{Pr}}\xspace}
\newcommand{\Formula}{\ensuremath{\phi}\xspace}
\newcommand{\altFormula}{\ensuremath{\psi}\xspace}
\newcommand{\meModels}{\ensuremath{\Models_\textit{ME}}\xspace}
\newcommand{\Models}{\ensuremath{\textit{Mod}}\xspace}
\newcommand{\sig}{\ensuremath{\mathsf{sig}}\xspace}
\newcommand{\blaschkeMetric}[1]{\ensuremath{\| #1 \|_B}} 
\newcommand{\varMetric}[1]{\ensuremath{\| #1 \|_1}} 
\newcommand{\eucMetric}[1]{\ensuremath{\| #1 \|_2}} 


\newcommand{\prop}[1]{\scalebox{0.9}{\ensuremath{\textsc{#1}}\xspace}}
\newcommand{\conc}[1]{\scalebox{0.9}{\ensuremath{\mathsf{#1}}\xspace}}
\newcommand{\NCp}[1]{\ensuremath{{\sf N_C^{#1}}}\xspace}
\newcommand{\NRp}[1]{\ensuremath{{\sf N_R^{#1}}}\xspace}

\begin{document}

\title{Probabilistic Reasoning in the Description Logic \plalc with the
		Principle of Maximum Entropy }
\author{
Rafael Pe\~naloza\inst{1} 
\and  
Nico Potyka\inst{2}
}

\institute{Free University of Bozen-Bolzano, Italy \\
			\email{rafael.penaloza@unibz.it}
            \and 
            University of Osnabr\"uck, Germany \\						
            \email{npotyka@uni-osnabrueck.de} }

\maketitle

\begin{abstract}
A central question for knowledge representation is how to encode and
handle uncertain knowledge adequately.
We introduce the probabilistic description logic \plalc that is 
designed for representing context-dependent knowledge,
where the actual context taking place is uncertain. 
\plalc allows the expression of logical dependencies on the domain and 
probabilistic dependencies on the possible contexts.
In order to draw probabilistic conclusions, we employ the 
principle of maximum entropy.
We provide reasoning algorithms for this logic,
and show that it satisfies several 
desirable properties of 
probabilistic logics.
\end{abstract}

\section{Introduction}

A fundamental element of any intelligent application is storing and manipulating the knowledge from the 
application domain. 
Logic-based knowledge representation
languages such as description logics (DLs)~\cite{BCM+-07} provide a
clear syntax and unambiguous semantics that guarantee the correctness of
the results obtained. However, languages based on classical logic are
ill-suited for handling the uncertainty inherent to many application
domains.
To overcome this limitation,
various probabilistic logics have been investigated during the
last three decades (e.g., \cite{Nilsson1986AI,Kern-IsbernerL04,BarnettP08}). 
In particular, several probabilistic DLs
have been developed~\cite{LutzS10,LuSt-JWS08}.
To handle probabilistic knowledge, 
many approaches require
a complete definition of joint probability distributions (JPD)
\cite{RBLZ-12,KlinovP11,Domingos2009,CePe-IJCAR14,dAFL-08}.
One approach to avoid a full JPD specification was proposed by Paris~\cite{Paris94}: 
the user gives a partial specification 
through a set of probabilistic constraints
and the partial knowledge is completed
by means of the principle of maximum entropy.

In this paper we consider a new probabilistic extension of description
logics based on the principle of maximum entropy. 
In our approach
we group different axioms from a knowledge base together into so-called contexts,
which are identified by a propositional formula. Intuitively, each context corresponds
to a possible situation, in which the associated sub-KB is guaranteed
to hold. Uncertainty is associated to the contexts through a set
of probabilistic constraints, which are interpreted under the principle
of maximum entropy. 

To facilitate the understanding of our approach, 
we focus on the DL
\ALC~\cite{SS91} as a prototypical example of a
knowledge representation language, and propositional probabilistic constraints 
as the framework for expressing uncertainty.
As reasoning service we consider subsumption
relations between concepts given some partial
knowledge of the current context.
Since the knowledge in a knowledge base
is typically incomplete, one cannot expect to 
obtain a precise probability for a given consequence. Instead, we compute a belief interval
that describes all the probability degrees that 
can be associated to the consequence without
contradiction.
The lowest bound of the interval corresponds to
a sceptical view, considering only the most
fundamental
models of the knowledge base. The upper
bound, in contrast, reflects the credulous belief
in which every context that is not explicitly 
removed is considered.
In the worst-case, we get the trivial interval 
$[0,1]$, in the best case, we get a point probability  where the upper and lower bounds coincide.  
In some applications, it might be reasonable to consider only one of these bounds. 
For instance, if the probability interval that a treatment will cause heavy complications is $[0.01,0.05]$, we might want to use the upper bound $0.05$. In contrast, when the probability interval that a treatment will be successful is $[0.7,0.9]$, we might be more interested in the lower bound $0.7$.

The main contributions of this paper are the following:
\begin{itemize}
\item we define the new probabilistic description logic \plalc that allows for a flexible description of 
axiomatic dependencies, and its reasoning problems (Section 3);
\item we explain in detail how degrees of belief for the subsumption problem can be computed (Section 4); and
\item we show that \plalc satisfies several desirable properties of probabilistic
logics (Section 5).
\end{itemize}

\section{Maximum Entropy}
We start by recalling the basic notions of probabilistic propositional
logic and the principle of maximum entropy.

Let \logic be a propositional language constructed over a finite signature $\sig(\logic)$, i.e., a set of propositional 
variables, in the usual way. An \logic-interpretation \logicInt is a truth assignment of the propositional variables in
$\sig(\logic)$. \logicInts
denotes the set of all \logic-interpretations.
Satisfaction of a formula $\Formula \in \logic$ by an \mbox{\logic-}interpretation $\logicInt\in\logicInts$ 
(denoted $\logicInt \models \Formula$) is defined as usual.
A probability distribution over $\logic$ is a function 
$P: \logicInts \rightarrow [0,1]$ where 
$\sum_{\logicInt \in \logicInts} P(\logicInt) = 1$. Probability 
distributions are extended to arbitrary \mbox{\logic-}formulas $\Formula$ by setting 
$P(\Formula) = \sum_{\logicInt \models \Formula} P(\logicInt)$.
\begin{definition}[probabilistic constraints, models] Given the propositional language \logic,
a \emph{probabilistic constraint (over \logic)} is an expression 
of the form
\begin{equation}
c_0 + \sum_{i=1}^k c_i \cdot \psf(\Formula_i) \geq 0
\label{eq:constraint}
\end{equation}
where $c_0, c_i \in \mathbb{R}$, and $\Formula_i \in \logic$, $1 \leq i \leq k$.
A probability distribution $P$ over \logic is a \emph{model} of the probabilistic constraint
$c_0+\sum_{i=1}^k c_i \cdot \psf(\Formula_i) \geq 0$ 
if and only if $c_0+\sum_{i=1}^k c_i \cdot P(\Formula_i) \geq 0$.
The distribution $P$ \emph{is a model} of the set of probabilistic contraints \probcons 
($P \models \probcons$) iff it satisfies all the constraints in \probcons.
The set of all models of $\probcons$ is denoted by $\Models(\probcons)$.
If $\Models(\probcons) \neq \emptyset$, we say that 
\probcons is \emph{consistent}.
\end{definition}
Our probabilistic constraints can express the most common types of constraints
considered in the literature of probabilistic logics.
For instance, probabilistic conditionals $(\altFormula \mid \Formula)[\ell,u]$ are satisfied iff 
$\ell \cdot P(\Formula) \leq P(\altFormula \wedge \Formula) \leq u \cdot P(\Formula)$
\cite{Lukasiewicz99probabilisticdeduction}.
That is, the conditional is satisfied iff the conditional probability
of $\altFormula$ given $\Formula$ is between $\ell$ and $u$ whenever $P(\Formula) > 0$. 
Sometimes $P(\Formula) > 0$ is demanded, 
but strict inequalities are computationally
difficult and the semantical differences are
negligible in many  cases, see \cite{Potyka16} for a thorough discussion.
These conditions can be expressed in the form~\eqref{eq:constraint} as follows
\begin{align*}
\psf(\altFormula \land \Formula) - \ell \cdot \psf(\Formula) & {} \geq 0, \qquad \textit{and}\\
u \cdot \psf(\Formula) - \psf(\altFormula \land \Formula) & {} \geq 0.
\end{align*}
Probabilistic constraints can also express more complex restrictions;
for example, we can state that
the probability that a bird cannot fly is at most
one fourth of the probability that a bird flies 
through the constraint
\begin{equation}
\frac{1}{4} \psf(\prop{bird} \land \prop{flies})
- \psf(\prop{bird} \wedge \neg \prop{flies}) 
\geq 0.
\label{exa:flies}
\end{equation}
To improve readability, we will often rewrite constraints in a 
more compact manner, using conditionals as in the first example, or
e.g.\ rewriting~\eqref{exa:flies} as
$
\frac{1}{4} \psf(\prop{bird} \land \prop{flies})
\geq \psf(\prop{bird} \wedge \neg \prop{flies}). 
$

In general, consistent sets of probabilistic constraints have infinitely
many models, and there is no obvious way to distinguish between them. 
One well-studied approach for dealing with this diversity is to focus 
on the model that maximizes the entropy
\begin{equation*}
H(P) = - \sum_{\logicInt \in \logicInts} P(\logicInt) \cdot \log P(\logicInt).
\end{equation*}
From an information-theoretic point of view, the maximum
entropy (ME) distribution can be regarded as the most conservative one in the sense that
it minimizes the information-theoretic distance (that is, the KL-divergence) to the uniform distribution among all probability distributions that satisfy our constraints.
In particular, if there are no restrictions on the probability distributions considered, then the uniform distribution is the ME distribution, 
see, e.g., \cite{yeung2008information} for a more detailed 
discussion of these issues.
A complete characterization of maximum entropy for the purpose of uncertain reasoning can be found in \cite{Paris94}.

\begin{definition}[ME-model]
Let \probcons be a consistent set of probabilistic constraints.
The ME-model \mePlong of $\probcons$ is the \emph{unique} solution of the 
maximization problem
$ 
\arg \max_{P \models \probcons} H(P).
$ 
\end{definition}
Existence and uniqueness of \meP follows from the fact that $H$ is strictly concave and 
continuous, and that the probability distributions that satisfy $\probcons$ form a compact and convex set. \mePlong is usually computed
by deriving an unconstrained optimization problem
by means of the Karush-Kuhn-Tucker conditions.
The resulting problem can be solved, for instance,
by (quasi-)Newton methods with cost $|\logicInts|^3$, see, e.g., \cite{Nocedal2006} for more details on these techniques.


\section{The Probabilistic Description Logic \plalc}

\plalc is a probabilistic extension of the classical description logic \ALC capable
of expressing complex logical and probabilistic relations.
As with classical DLs, the main building blocks in \plalc are \emph{concepts}. 
Syntactically, \plalc concepts are constructed exactly as \ALC concepts.
Given two disjoint sets \NC of \emph{concept names} and \NR of \emph{role names}, 
\plalc concepts are built using the grammar rule
$
C ::= A \mid \neg C \mid C\sqcap C\mid \exists r.C,
$
where $A\in\NC$ and $r\in\NR$.
Note that we can derive disjunction, universal quantification
and subsumption from these rules by using logical equivalences
like $C_1 \sqcup C_2 \equiv \neg (\neg C_1 \sqcap \neg C_2)$.
The knowledge of the application domain is expressed
through a finite set of axioms that restrict the way the
different concepts and roles may be interpreted.
To express both probabilistic and logical relationships,
each axiom is annotated with a formula from \logic that intuitively expresses the
context in which this axiom holds.
\begin{definition}[KB]
An \emph{\logic-restricted general concept inclusion} (\logic-GCI) is of the form 
$\gci{C\sqsubseteq D:\kappa}$ where $C,D$ are \plalc concepts and
$\kappa$ is an \logic-formula.
An \emph{\logic-TBox} is a finite set of \logic-GCIs.
An \plalc \emph{knowledge base}~(KB) over \logic is a pair $\Kmc=(\probcons,\Tmc)$ where \probcons is a set of probabilistic constraints
and \Tmc is an \logic-TBox.
\end{definition}
\begin{example}
\label{exa:antibiotics_degrees_of_belief}
Consider an application modeling beliefs about bacterial 
and viral infections using the concept names
\conc{strep} (streptococcal infection),
\conc{bac} (bacterial infection),
\conc{vir} (viral infection),
\conc{inf} (infection), and 
\conc{ab} (antibiotic);
and the role names
\conc{sf} (suffers from), and
\conc{suc} (successful treatment);
and the propositional variables
\prop{res} (antibiotic resistance), and
\prop{h} (heavy use of antibiotics by patient).
Define the \logic-TBox \Texa containing the \logic-GCIs
\begin{align*}
&{\gci{\exists\conc{sf}.\conc{bac} \sqsubseteq \exists\conc{suc}.\conc{ab}: \neg \prop{res}{\land} \neg\prop{h}}},
&\gci{\exists\conc{sf}.\conc{vir} \sqsubseteq \neg \exists\conc{suc}.\conc{ab}: \top},
&\quad \gci{\conc{strep} \sqsubseteq \conc{bac}: \top},\\
&{\gci{\exists\conc{sf}.\conc{bac} \sqsubseteq \neg \exists\conc{suc}.\conc{ab}: \prop{res}}}, 
&{\gci{\conc{bac} \sqsubseteq \conc{inf}: \top}},
&\quad \gci{\conc{vir} \sqsubseteq \conc{inf}: \top},
\end{align*}
where $\top$ is any \logic-tautology. 
For example, the first \logic-GCI states that a bacterial infection can be treated successfully with antibiotics if no antibiotic resistance is present and there was no heavy use of antibiotics; 
the second one states that viral infections can never be treated with antibiotics successfully. 
Consider additionally the set \probcons containing the probabilistic constraints
containing
\begin{align*}
&(\prop{res})[0.05], 
&(\prop{res} \mid \prop{h})[0.8].
\end{align*}
That is, the probability of an antibiotic resistance is $5\%$ if no further information
is given. However, if the patient used antibiotics heavily, the probability increases to $80\%$.
\end{example}
%
Notice that the probabilistic constraints, and hence the representation
of the uncertainty in the knowledge, refer only to the propositional
formulas that label the \logic-GCIs. In \plalc, the uncertainty of the knowledge
is handled through these propositional formulas as explained next.

A possible world interprets both the axiom language (i.e., the concept
and role names) and the context language (the propositional variables).
Intuitively, it 
describes a possible context (\mbox{\logic-}interpretation) together with the relationships
between concepts in that situation (\alc-interpretation).
\begin{definition}[possible world]
A \emph{possible world} is a triple 
$\Imc=(\Delta^\Imc,\cdot^\Imc,\logicInt^\Imc)$ where $\Delta^\Imc$ is a non-empty set 
(called the \emph{domain}),
$\logicInt^\Imc$ is an \logic-interpretation, and $\cdot^\Imc$ is an 
\emph{interpretation function} that maps every concept name $A$ to a set 
$A^\Imc\subseteq\Delta^\Imc$ and 
every role name $r$ to a binary relation $r^\Imc\subseteq\Delta^\Imc\times\Delta^\Imc$.
\end{definition}
The interpretation function $\cdot^\Imc$ is extended to complex concepts 
as usual in DLs by letting
$(\neg C)^\Imc:=\Delta^\Imc\setminus C^\Imc$; 
$(\exists r.C)^\Imc:=\{d\in\Delta^\Imc\mid \exists e\in\Delta^\Imc.(d,e)\in r^\Imc, e\in C^\Imc\}$; and
$(C\sqcap D)^\Imc:=C^\Imc\cap D^\Imc$.
A possible world
is a model of an \logic-GCI iff it
satisfies the description logic constraint of the axiom
whenever it satisfies the context.

\begin{definition}[model of TBox]
\label{def:model}
The possible world $\Imc=(\Delta^\Imc,\cdot^\Imc,\logicInt^\Imc)$ is a \emph{model} of the \mbox{\logic-}GCI \gci{C\sqsubseteq D:\kappa}, denoted as 
$\Imc\models\gci{C\sqsubseteq D:\kappa}$, iff (i) $\logicInt^\Imc\not\models\kappa$, or (ii) $C^\Imc\subseteq D^\Imc$.
It is a \emph{model} of the \logic-TBox \Tmc iff it is a model of all the \mbox{\logic-}GCIs in \Tmc.
\end{definition}
The classical DL \ALC is a special case of \plalc where all the axioms are annotated
with an \logic-tautology $\top$. To preserve the syntax of classical DLs, we denote
such \logic-GCIs as $C\sqsubseteq D$ instead of $\gci{C\sqsubseteq D:\top}$.
In this case, the condition~(i) from Definition~\ref{def:model} 
cannot be satisfied, and hence a model is required to
satisfy $C^\Imc\subseteq D^\Imc$ for all \logic-GCIs $C\sqsubseteq D$ in the TBox.
For a deeper introduction to classical \ALC, see~\cite{BCM+-07}.

According to our semantics, we only demand that the \logic-GCIs are satisfied in some
specific contexts. Thus, it is often useful to focus on the classical \ALC TBox
that contains the knowledge that holds in a particular situation.
%
For a KB $\Kmc=(\probcons,\Tmc)$ and $\logicInt\in\logicInts$, the 
\emph{\logicInt-restricted TBox} is 
the \ALC TBox
\[
\Tmc_\logicInt := \{ C\sqsubseteq D \mid \gci{C\sqsubseteq D:\kappa}\in\Tmc, \logicInt\models\kappa\}. 
\]
%
The possible world \Imc satisfies $\Tmc_\logicInt$ ($\Imc\models\Tmc_\logicInt$) 
if for all \logic-GCIs $C\sqsubseteq D\in\Tmc_\logicInt$ it holds that $C^\Imc\subseteq D^\Imc$.
In the following, we will often consider \emph{subsumption} and \emph{strong non-subsumption}
between concepts w.r.t.\ a restricted TBox. We say that $C$ is \emph{subsumed} by
$D$ w.r.t.\ $\Tmc_\logicInt$ ($\Tmc_\logicInt\models C\sqsubseteq D$) if 
for every $\Imc\models\Tmc_\logicInt$ it holds that $C^\Imc\subseteq D^\Imc$. Dually,
$C$ is \emph{strongly non-subsumed} by
$D$ w.r.t.\ $\Tmc_\logicInt$ ($\Tmc_\logicInt\models C\not\,\not\sqsubseteq D$) if 
for every $\Imc\models\Tmc_\logicInt$, $C^\Imc\not\subseteq D^\Imc$ holds.
Notice that strong non-subsumption requires that the inclusion between axioms does not hold
in \emph{any} possible world satisfying $\Tmc_\logicInt$. Hence, this condition is more strict
than just negating the subsumption relation.

We now describe how the probabilistic constraints are handled in our logic.
An \plalc-interpretation consists of a finite set of possible worlds and a probability
function over these worlds.
\begin{definition}[\plalc-interpretation]
\label{def:prob:interpretation}
An \emph{\mbox{\plalc-}in\-ter\-pre\-ta\-tion} is a pair of the form 
{$\Pmc=(\Imf,P_\Imf)$}, 
where \Imf is a non-empty, finite set of possible worlds
and $P_\Imf$ is a probability distribution over \Imf. 
\end{definition}
Each \plalc-interpretation induces a probability distribution over \logic.
The probability of a context can be obtained by adding the probabilities 
of all possible worlds in which this context holds. 
\begin{definition}[distribution induced by $\Pmc$]
Let $\Pmc=(\Imf,P_\Imf)$ be an \mbox{\plalc-}in\-ter\-pre\-ta\-tion.
The probability distribution $\Pinduced: \logicInts \rightarrow [0,1]$ \emph{induced} 
by \Pmc is defined by
$\Pinduced(\logicInt) := \sum_{\Imc \in \Imf|_{\logicInt}} P_\Imf(\Imc)$,
where $\Imf|_{\logicInt} = \{ (\Delta^\Imc,\cdot^\Imc,\logicInt^\Imc) \in \Imf 
\mid \logicInt^\Imc = \logicInt \}$.
\end{definition}
As usual, reasoning is restricted to interpretations that satisfy the restrictions imposed
by the knowledge base.
In our case, we have to demand that the interpretation is consistent
with both the classical and the probabilistic part of our knowledge base.
That is, we consider only those possible worlds that satisfy both the terminological
knowledge (\Tmc) and the probabilistic constraints (\probcons).
\begin{definition}[model]
\label{def:prob:model}
Let $\Pmc=(\Imf,P_\Imf)$ be an
\mbox{\plalc-}in\-ter\-pre\-tation.
\Pmc is consistent with the TBox \Tmc if 
every $\Imc\in\Imf$ is a model of \Tmc.
\Pmc is consistent with the set of probabilistic constraints \probcons iff $\Pinduced \models \probcons$.
The \plalc-interpretation \Pmc is a \emph{model} of the KB 
$\Kmc=(\probcons,\Tmc)$ iff it is consistent with both \Tmc and \probcons.
As usual, a KB is \emph{consistent} iff it has a model.
\end{definition}
Notice that \plalc-KBs can express both, logical and probabilistic 
dependencies between axioms. For instance, two \logic-GCIs
$\gci{C_1\sqsubseteq D_1:\kappa_1}$ and $\gci{C_2\sqsubseteq D_2:\kappa_2}$
where $\kappa_1\Rightarrow\kappa_2$ express that whenever the first
\logic-GCI is satisfied, the second one must also hold. Similarly, the probabilistic
dependencies between axioms are expressed via the probabilistic constraints
of the labeling formulas.

We are interested in computing degrees of belief for subsumption relations
between concepts.
We define the conditional probability of a subsumption relation given a context 
with respect to a given \plalc-interpretation following the usual notions of 
conditioning.
\begin{definition}[probability of subsumption]
Let $C,D$ be concepts, $\kappa$ a context and $\Pmc$ an \plalc-interpretation. 
The \emph{conditional probability of $C\sqsubseteq D$ given $\kappa$} with respect to 
$\Pmc$ is
\begin{align}
\label{eq:DefCondProbSubsumpt}
\propSub_\Pmc(C\sqsubseteq D \mid \kappa):=
\frac{\sum_{\Imc\in\Imf,\Imc \models \kappa,\Imc\models C\sqsubseteq D}P_\Imf(\Imc)}
{\sum_{\Imc\in\Imf,\Imc \models \kappa}P_\Imf(\Imc)}.
\end{align}
\end{definition}
%
Notice that the denominator in \eqref{eq:DefCondProbSubsumpt} can be rewritten as
\begin{align*}
\sum_{\Imc\in\Imf,\Imc \models \kappa}P_\Imf(\Imc)
&= \sum_{\logicInt \models \kappa} \sum_{\Imc\in\Imf|_\logicInt}P_\Imf(\Imc)
= \sum_{\logicInt \models \kappa} \Pinduced(\logicInt) 
= \Pinduced(\kappa).
\end{align*}
As usual, the conditional probability is only well-defined when $\Pinduced(\kappa)>0$.

Recall that  the set of probabilistic constraints \probcons may be satisfied
by an infinite class of probability distributions. 
In the spirit of maximum entropy reasoning,
we consider only the most conservative ones
in the sense that they induce the ME-model \meP of \probcons.
\begin{definition}[ME-\plalc-model]
\label{def:prob:memodel}
An \plalc-model \Pmc of \Kmc
 is called an \emph{ME\mbox{-}\plalc-model of \Kmc} iff
$\Pinduced = \meP$.
The set of all ME-\plalc-models of $\Kmc$ is denoted by
$\meModels(\Kmc)$.
\Kmc is called \emph{ME-consistent} iff $\meModels(\Kmc) \neq \emptyset$. 
\end{definition}
Note that ME-consistency is a strictly stronger notion of consistency.
ME-con\-sis\-tent knowledge bases are always consistent, but 
the converse does not necessarily hold if the classical
TBox obtained from $\Tmc$ by restricting to a context is
inconsistent as we show in the following example.
\begin{example}
Let $\sig(\logic) = \{x\}$ and 
$\Kmc=(\probcons,\Tmc)$ be the KB with $\probcons = \emptyset$ and 
$\Tmc = \{\gci{A\sqcup\neg A \sqsubseteq A\sqcap\neg A: x}\}$.
Since $A\sqcup\neg A \sqsubseteq A\sqcap\neg A$ is contradictorial,
each \plalc-model of \Kmc must satisfy  $\neg x$.
There certainly are such models, but in each such model \Pmc,
$\Pinduced(x) = 0$.
However, since $\probcons = \emptyset$, we have $\meP(x) = 0.5$ 
and hence $\Kmc$ has no ME-model.
\end{example}
ME-inconsistency rules out
some undesired cases in which the whole knowledge base is 
consistent, but the TBox restricted to some context is 
inconsistent.
The following theorem gives a simple characterization of 
ME-consistency: to verify ME-consistency of a KB, it suffices to 
check consistency of the TBoxes induced by the 
\mbox{\logic-}interpretations that have positive probability 
with respect to $\meP$. By the properties of the ME distribution, these are the interpretations
that are not explicitly restricted to have zero probability through \probcons.
%
\begin{restatable}{theorem}{consistency}
The KB $\Kmc=(\probcons,\Tmc)$ is ME-consistent iff for every
$\logicInt\in\logicInts$ such that $\meP(\logicInt)>0$, $\Tmc_\logicInt$ is consistent.
\end{restatable}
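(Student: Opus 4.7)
The plan is to prove both directions of the equivalence separately. For the forward direction, I would start from an arbitrary ME-\plalc-model $\Pmc=(\Imf,P_\Imf)$ of \Kmc and show that, for any $\logicInt$ with $\meP(\logicInt)>0$, the restricted TBox $\Tmc_\logicInt$ has a model. Since $\Pinduced=\meP$ by the definition of ME-\plalc-model, $\Pinduced(\logicInt)=\sum_{\Imc\in\Imf|_\logicInt}P_\Imf(\Imc)>0$ forces $\Imf|_\logicInt\neq\emptyset$. Picking any $\Imc\in\Imf|_\logicInt$, consistency of $\Pmc$ with \Tmc gives $\Imc\models\Tmc$; combining this with $\logicInt^\Imc=\logicInt$ and the definition of $\Tmc_\logicInt$ yields that the underlying \ALC-interpretation of \Imc satisfies every axiom in $\Tmc_\logicInt$, so $\Tmc_\logicInt$ is consistent.

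For the backward direction, I would construct an explicit ME-\plalc-model by gluing classical \ALC-models of the restricted TBoxes with weights dictated by \meP. Concretely, for each $\logicInt\in\logicInts$ with $\meP(\logicInt)>0$, the assumption gives a classical \ALC-model $\Jmc_\logicInt$ of $\Tmc_\logicInt$; I would form the possible world $\Imc_\logicInt=(\Delta^{\Jmc_\logicInt},\cdot^{\Jmc_\logicInt},\logicInt)$ by attaching the propositional part $\logicInt$. Setting $\Imf:=\{\Imc_\logicInt\mid\meP(\logicInt)>0\}$ (which is finite because $\sig(\logic)$ is finite) and $P_\Imf(\Imc_\logicInt):=\meP(\logicInt)$ gives a candidate \plalc-interpretation $\Pmc$.

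Then I would verify the three model conditions. First, each $\Imc_\logicInt$ satisfies \Tmc via Definition~\ref{def:model}: for every $\gci{C\sqsubseteq D:\kappa}\in\Tmc$, either $\logicInt\not\models\kappa$ (condition (i)) or $C\sqsubseteq D\in\Tmc_\logicInt$, which by construction of $\Jmc_\logicInt$ yields $C^{\Imc_\logicInt}\subseteq D^{\Imc_\logicInt}$ (condition (ii)). Second, the induced distribution $\Pinduced$ coincides with \meP: for each $\logicInt$ with $\meP(\logicInt)>0$ the fiber $\Imf|_\logicInt$ is the singleton $\{\Imc_\logicInt\}$ by the uniqueness of the propositional component in our construction, while for $\logicInt$ with $\meP(\logicInt)=0$ the fiber is empty; in both cases $\Pinduced(\logicInt)=\meP(\logicInt)$. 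Third, since $\meP\models\probcons$, we get $\Pinduced\models\probcons$, and $\Pinduced=\meP$ makes \Pmc an ME-\plalc-model of \Kmc.

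The proof is mostly a clean construction, and I do not expect a serious obstacle; the only subtle point is being careful that the $\Imf$ constructed is indeed a valid \plalc-interpretation, i.e.\ a \emph{finite}, non-empty set of possible worlds. Finiteness follows from $\sig(\logic)$ being finite, and non-emptiness follows from the fact that \meP is a probability distribution, hence at least one $\logicInt$ has $\meP(\logicInt)>0$. The other point worth double-checking is that in the forward direction one truly uses only $\Pmc$'s consistency with \Tmc and the equality $\Pinduced=\meP$, and not any additional property of the maximum-entropy distribution beyond positivity of its support.
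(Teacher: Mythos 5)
Your proposal is correct and follows essentially the same route as the paper's own proof: the forward direction extracts a possible world from the fiber $\Imf|_\logicInt$ of an ME-model (nonempty because $\Pinduced(\logicInt)=\meP(\logicInt)>0$) and reads off a classical model of $\Tmc_\logicInt$, while the backward direction glues one classical model per positive-probability context with weights $\meP(\logicInt)$. You in fact spell out the verification steps (the singleton fibers, finiteness, non-emptiness) that the paper leaves as "easy to verify."
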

%
\noindent
For the rest of this paper we consider only ME\mbox{-}consistent
KBs. Hence, whenever we speak of a KB \Kmc, we implicitly assume
that \Kmc has at least one ME-model.

We are interested in computing the probability of a subsumption relation w.r.t.\ 
a given KB \Kmc. Notice that, although we consider only one probability distribution
\meP, there can still exist many different ME-models of \Kmc, which yield different
probabilities for the same subsumption relation. One way to handle this is to consider the 
smallest and largest probabilities that can be consistently associated to this relation.
We call them the \emph{sceptical} and the \emph{creduluos} degrees of belief, 
respectively.
\begin{definition}[degree of belief]
Let $C,D$ be \plalc concepts, 
$\kappa$ a context,
and $\Kmc=(\probcons,\Tmc)$ an \plalc KB. 
The \emph{sceptical degree of belief of $C\sqsubseteq D$ given $\kappa$ w.r.t.\ \Kmc} is 
\[
\scepbelief_\Kmc(C\sqsubseteq D  \mid \kappa):=
			\inf_{\Pmc \in \meModels(\Kmc)}\propSub_\Pmc(C\sqsubseteq D \mid \kappa).
\]
The \emph{credulous degree of belief of $C\sqsubseteq D$ given $\kappa$ w.r.t.\ \Kmc} is 
\[
\credbelief_\Kmc(C\sqsubseteq D  \mid \kappa):=
			\sup_{\Pmc \in \meModels(\Kmc)}\propSub_\Pmc(C\sqsubseteq D  \mid \kappa).
\]
\end{definition}
\begin{example}
\label{exa:antibiotics_degrees_of_belief:2}
Consider $\Kexa$
from Example~ \ref{exa:antibiotics_degrees_of_belief}.
If we ask for the degrees of belief that a patient who suffers
from an infection can be successfully treated with antibiotics,
we obtain
\begin{align*}
\scepbelief_\Kexa(\exists\conc{sf}.\conc{inf} \sqsubseteq \exists\conc{suc}.\conc{ab}\mid \top)
& {} = 0,\\
\credbelief_\Kexa(\exists\conc{sf}.\conc{inf} \sqsubseteq \exists\conc{suc}.\conc{ab}\mid \top)
& {} = 1.
\end{align*}
These bounds are not very informative, but they are perfectly justified by our knowledge
base since we do not know anything about the effectiveness of antibiotics with
respect to infections in general.
However, for a patient who suffers from a streptococcal infection we get
\begin{align*}
\scepbelief_\Kexa(\exists\conc{sf}.\conc{strep} \sqsubseteq \exists\conc{suc}.\conc{ab}\mid \top)
& {} = 0.9405,\\
\credbelief_\Kexa(\exists\conc{sf}.\conc{strep} \sqsubseteq \exists\conc{suc}.\conc{ab}\mid \top)
& {} = 0.95.
\end{align*}
If we know that this patient used antibiotics heavily in the past, then
there is nothing in our knowledge base that guarantees the existence of a
successful treatment. Hence, the degrees of belief become
\begin{align*}
\scepbelief_\Kexa(\exists\conc{sf}.\conc{strep} \sqsubseteq \exists\conc{suc}.\conc{ab}\mid \prop{h})
& {} = 0\\
\credbelief_\Kexa(\exists\conc{sf}.\conc{strep} \sqsubseteq \exists\conc{suc}.\conc{ab}\mid \prop{h})
& {} = 0.2.\\
\end{align*}
\end{example}
Our definition of the sceptical degree of belief raises a
 philosophical question: should there be no
difference between the degree of belief $0$ and an
infinitely small degree of belief? A dual question arises for the credulous degree of 
belief and the probability $1$. However,
as we show in the next section, the sceptical and credulous 
degrees of belief actually correspond to minimum and
maximum rather than to infimum and supremum
(see Corollary~\ref{cor:witness})
so that these questions become vacuous.
From the following theorem we can conclude that every
intermediate degree can also be obtained by some model of 
the KB.
\begin{restatable}[Intermediate Value Theorem]{theorem}{IVT}
Let $p_1<p_2$ and $\Pmc_1$ and $\Pmc_2$ be two ME\mbox{-\plalc-}models of the KB
$\Kmc=(\probcons,\Tmc)$ such that  
$\propSub_{\Pmc_1}(C\sqsubseteq D\mid\kappa) = p_1$ and 
$\propSub_{\Pmc_2}(C\sqsubseteq D\mid\kappa) = p_2$.
Then for each $p$ between $p_1$ and $p_2$ there exists an
ME-\plalc-model $\Pmc$ of \Kmc such that
$\propSub_{\Pmc}(C\sqsubseteq D\mid\kappa) = p$
\end{restatable}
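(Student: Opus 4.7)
The plan is to show that \emph{convex combinations} of ME-\plalc-models are again ME-\plalc-models, and that the conditional probability of a subsumption relation depends affinely on the mixing parameter. Given $p$ strictly between $p_1$ and $p_2$, we then pick the unique $\lambda \in [0,1]$ with $p = \lambda p_1 + (1-\lambda)p_2$ and take the $\lambda$-mixture of $\Pmc_1, \Pmc_2$ as the required witness.

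Concretely, write $\Pmc_1 = (\Imf_1, P_1)$ and $\Pmc_2 = (\Imf_2, P_2)$, and for $\lambda \in [0,1]$ define the mixture $\Pmc_\lambda = (\Imf_1 \cup \Imf_2, Q_\lambda)$ with $Q_\lambda(\Imc) = \lambda \hat P_1(\Imc) + (1-\lambda) \hat P_2(\Imc)$, where $\hat P_i$ extends $P_i$ by zero outside $\Imf_i$ (if names clash, take a disjoint-union version first, which does not change any of the quantities below). First I would check that $\Pmc_\lambda$ is a model of $\Kmc$: every $\Imc \in \Imf_1 \cup \Imf_2$ is a model of \Tmc, and by linearity of $\Pinducedp{\cdot}$ in the world-weights we get $\Pinducedp{\Pmc_\lambda} = \lambda \Pinducedp{\Pmc_1} + (1-\lambda) \Pinducedp{\Pmc_2} = \meP$, since both $\Pmc_1,\Pmc_2$ are ME-models. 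Hence $\Pmc_\lambda \in \meModels(\Kmc)$.

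Next I would compute $\propSub_{\Pmc_\lambda}(C \sqsubseteq D \mid \kappa)$. Both the numerator and the denominator in \eqref{eq:DefCondProbSubsumpt} are linear functions of the world-weights, and the denominator equals $\Pinducedp{\Pmc_\lambda}(\kappa) = \meP(\kappa)$, which is the \emph{same} value for $\Pmc_1, \Pmc_2$ and $\Pmc_\lambda$ (and must be strictly positive, since otherwise $p_1, p_2$ would be undefined). Writing $N_i$ for the numerator of $\Pmc_i$ and $q = \meP(\kappa)$, we obtain
\[
\propSub_{\Pmc_\lambda}(C\sqsubseteq D\mid\kappa)
= \frac{\lambda N_1 + (1-\lambda) N_2}{q}
= \lambda p_1 + (1-\lambda) p_2.
\]
Choosing $\lambda = (p_2 - p)/(p_2 - p_1)$ yields $\propSub_{\Pmc_\lambda}(C\sqsubseteq D\mid\kappa) = p$, and $\lambda \in [0,1]$ by hypothesis.

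The only genuine subtlety is the bookkeeping in the mixture: one must ensure that the weights on worlds that happen to belong to both $\Imf_1$ and $\Imf_2$ are summed correctly (or, equivalently, work with a disjoint copy), and one must verify that $\meP(\kappa) > 0$ so that the conditional probability is defined throughout. Both points are routine once spelled out; the conceptual core is simply that $\meModels(\Kmc)$ is convex and that $\propSub_{\cdot}(C\sqsubseteq D\mid\kappa)$, viewed on this convex set, is affine with constant denominator $\meP(\kappa)$.
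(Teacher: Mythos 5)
Your proposal is correct and follows essentially the same route as the paper: form the $\lambda$-mixture of the two models (after making the sets of possible worlds disjoint), observe that the induced distribution is the corresponding convex combination of two copies of $\meP$ and hence equals $\meP$, and note that the conditional probability is affine in $\lambda$ with constant denominator $\meP(\kappa)$. The only differences from the paper's proof are cosmetic (the roles of $\lambda$ and $1-\lambda$ are swapped).
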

\noindent
As we will show in Corollary~\ref{cor:witness}, both the sceptical degree 
$\scepbelief_\Kmc(C\sqsubseteq D\mid\kappa)$ and 
the credulous degree
$\credbelief_\Kmc(C\sqsubseteq D\mid\kappa)$
are in fact witnessed by some ME\mbox{-}models. Therefore it is 
meaningful to consider the whole interval of 
beliefs between $\scepbelief_\Kmc(C\sqsubseteq D\mid\kappa)$ and 
$\credbelief_\Kmc(C\sqsubseteq D\mid\kappa)$.
\begin{definition}[belief interval]
Let $C,D$ be \plalc concepts, 
$\kappa \in \logic$ a context 
and $\Kmc=(\probcons,\Tmc)$ a \plalc KB. 
The \emph{belief interval} for $C\sqsubseteq D$ w.r.t.\ \Kmc given $\kappa$ is 
\[
\belief_\Kmc(C\sqsubseteq D \mid \kappa):= 
[\scepbelief_\Kmc(C\sqsubseteq D \mid \kappa), \credbelief_\Kmc(C\sqsubseteq D \mid \kappa)].
\]
\end{definition}
%
%
\section{Computing Beliefs}

In this section we show how to compute the belief interval.
The first theorem states that the sceptical degreef of belief for a subsumption relation
can be computed by adding the probabilities of those \logic-interpretations $w$ that entail this 
subsumption in the corresponding restricted TBox $\Tmc_w$.
\begin{restatable}{theorem}{sceptical}
\label{thm:precise}
Let $\Kmc=(\probcons,\Tmc)$ be a KB, $C,D$ two concepts, and $\kappa$ a context
such that $\meP(\kappa)>0$. 
Then
\[
\scepbelief_\Kmc(C\sqsubseteq D\mid \kappa)=
	\frac{\sum_{{w\in\logicInts},{\Tmc_w\models C\sqsubseteq D},{w\models\kappa}}\meP(w)}
    	 {\meP(\kappa)}.
\]
\end{restatable}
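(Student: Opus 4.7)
My plan is to establish the identity by matching two bounds: first I show that $\scepbelief_\Kmc(C\sqsubseteq D\mid\kappa)$ is at least the right-hand side by analysing an arbitrary ME-\plalc-model, and then I exhibit an explicit ME-\plalc-model attaining that value, which simultaneously shows the infimum is realised as a minimum.

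For the lower bound, fix any ME-model $\Pmc=(\Imf,P_\Imf)$. The denominator of $\propSub_\Pmc(C\sqsubseteq D\mid\kappa)$ was already rewritten in the paragraph preceding the theorem as $\Pinduced(\kappa)=\meP(\kappa)$. For the numerator, I partition $\Imf$ according to the propositional projection $\logicInt^\Imc$ of each possible world. An $\Imc$ with $\logicInt^\Imc=w$ satisfies $\kappa$ exactly when $w\models\kappa$, and, since $\Pmc$ is consistent with $\Tmc$, each such $\Imc$ is a model of $\Tmc$ and hence of the $w$-restricted TBox $\Tmc_w$ (axioms whose context $w$ does not satisfy are vacuously met). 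Consequently, whenever $\Tmc_w\models C\sqsubseteq D$ the inclusion $C^\Imc\subseteq D^\Imc$ holds for every $\Imc\in\Imf|_w$, so such a $w$ contributes the full mass $\sum_{\Imc\in\Imf|_w}P_\Imf(\Imc)=\meP(w)$ to the numerator. Summing over the $w$ with $w\models\kappa$ and $\Tmc_w\models C\sqsubseteq D$ yields the desired lower bound.

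For the matching upper bound I construct a worst-case ME-model. For each $w\in\logicInts$ with $\meP(w)>0$, ME-consistency of $\Kmc$ guarantees, via the characterisation theorem stated earlier, that $\Tmc_w$ is consistent. If $\Tmc_w\models C\sqsubseteq D$, pick any model $\Imc_w$ of $\Tmc_w$ and set its propositional part to $w$; otherwise pick a model of $\Tmc_w$ whose propositional part is $w$ and that witnesses $C^{\Imc_w}\not\subseteq D^{\Imc_w}$. Taking $\Imf=\{\Imc_w\mid\meP(w)>0\}$ and $P_\Imf(\Imc_w)=\meP(w)$ defines an \plalc-interpretation whose induced distribution is exactly $\meP$, hence is consistent with $\probcons$, and every element of $\Imf$ is a model of $\Tmc$ by construction. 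A direct evaluation of $\propSub_\Pmc(C\sqsubseteq D\mid\kappa)$ for this $\Pmc$ gives exactly the right-hand side, so this $\Pmc$ realises the infimum.

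The main subtlety I anticipate is justifying that the constructed $\Pmc$ is a genuine ME-model: one has to argue that concentrating the whole mass $\meP(w)$ of a context $w$ on a single possible world, specifically on a non-subsumption witness when $\Tmc_w\not\models C\sqsubseteq D$, does not violate $\probcons$. This goes through because $\probcons$ only constrains the induced marginal $\Pinduced$ on $\logicInts$, which depends solely on the totals $\meP(w)$ and is insensitive to how the mass is distributed among the candidate \ALC-models sitting above each $w$.
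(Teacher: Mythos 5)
Your proof is correct, and it reaches the same witness construction as the paper for one half of the argument, but it handles the other half by a genuinely different and more direct route. The paper only ever proves $\scepbelief_\Kmc(C\sqsubseteq D\mid\kappa)\le$ RHS via the explicit model $\Pmc_\Kmc$, and then rules out a strictly smaller value by contradiction: it invokes a separate "pithy model" lemma (every model can be replaced by one with at most one possible world per context, without increasing the conditional probability) and derives a contradiction from the existence of a pithy model below $\Pmc_\Kmc$. You instead prove the matching inequality $\scepbelief_\Kmc(C\sqsubseteq D\mid\kappa)\ge$ RHS directly, by partitioning the possible worlds of an \emph{arbitrary} ME-model by their propositional projection and observing that every world sitting over a context $w$ with $\Tmc_w\models C\sqsubseteq D$ must itself satisfy $C^\Imc\subseteq D^\Imc$, so the full mass $\meP(w)$ lands in the numerator. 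This is more elementary and self-contained: it avoids the pithy-model machinery entirely for this theorem (the paper still needs that lemma for Corollary~\ref{cor:witness} in its general form, but your witness already realises the minimum for the sceptical bound). Two further points in your favour: you correctly note that only contexts with $\meP(w)>0$ should be included in the witness, which matters because ME-consistency only guarantees consistency of $\Tmc_w$ for such $w$ (the paper's ``for every $w\in\logicInts$'' glosses over the case where $\Tmc_w$ is inconsistent and hence has no model to pick); and your closing remark about why concentrating each context's mass on a single possible world cannot violate $\probcons$ is exactly the right justification that the constructed interpretation is an ME-model.
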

\noindent
Dually, the credulous degree of belief for a subsumption relation can be computed by removing all the 
situations in which this relation cannot possibly hold.
%
\begin{restatable}{theorem}{credulous}
\label{thm:precise:credulous}
Let $\Kmc=(\probcons,\Tmc)$ be a KB, $C,D$ two concepts, and $\kappa$ a context with
$\meP(\kappa)>0$.
Then
\[
\credbelief_\Kmc(C\sqsubseteq D\mid\kappa)=
	1 - \frac{\sum_{{w\in\logicInts},{\Tmc_w\models C\not\,\not\sqsubseteq D},{w\models\kappa}}\meP(w)}
    		{\meP(\kappa)}.
\]
\end{restatable}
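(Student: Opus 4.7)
The plan is to mimic the structure of the proof of Theorem~\ref{thm:precise}, but now maximizing instead of minimizing the numerator of the conditional probability. First I would fix an arbitrary ME-\plalc-model $\Pmc=(\Imf,P_\Imf)$ of \Kmc and, as was already observed after equation~\eqref{eq:DefCondProbSubsumpt}, rewrite the denominator of $\propSub_\Pmc(C\sqsubseteq D\mid\kappa)$ as $\Pinduced(\kappa)=\meP(\kappa)$, which is a constant depending only on \probcons and not on the particular choice of \Pmc. So maximizing $\propSub_\Pmc(C\sqsubseteq D\mid\kappa)$ is the same as maximizing the numerator $\sum_{\Imc\models\kappa,\,\Imc\models C\sqsubseteq D}P_\Imf(\Imc)$ over all ME-models.

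Next I would partition the possible worlds in $\Imf$ according to their propositional part: for each $w\in\logicInts$, write $\Imf|_w$ for the worlds whose \logic-interpretation is $w$. Since $\Imc\models\kappa$ depends only on $w=\logicInt^\Imc$, the numerator splits as $\sum_{w\models\kappa}\sum_{\Imc\in\Imf|_w,\,\Imc\models C\sqsubseteq D}P_\Imf(\Imc)$, while $\sum_{\Imc\in\Imf|_w}P_\Imf(\Imc)=\meP(w)$ is fixed by the ME condition. Hence the maximum contribution of each $w$ with $w\models\kappa$ is bounded by $\meP(w)$ whenever some model of $\Tmc_w$ witnesses $C\sqsubseteq D$, and is $0$ otherwise. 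Because every $\Imc\in\Imf$ satisfies $\Tmc$ (by consistency with $\Tmc$) and in particular satisfies $\Tmc_w$ when $\logicInt^\Imc=w$, the case ``contribution $0$'' is precisely $\Tmc_w\models C\not\,\not\sqsubseteq D$. This yields the upper bound
\[
\propSub_\Pmc(C\sqsubseteq D\mid\kappa)\leq \frac{\sum_{w\models\kappa,\,\Tmc_w\not\models C\not\,\not\sqsubseteq D}\meP(w)}{\meP(\kappa)}
= 1-\frac{\sum_{w\models\kappa,\,\Tmc_w\models C\not\,\not\sqsubseteq D}\meP(w)}{\meP(\kappa)},
\]
where the last equality uses $\sum_{w\models\kappa}\meP(w)=\meP(\kappa)$ together with ME-consistency, which guarantees that $\Tmc_w$ is consistent whenever $\meP(w)>0$ so that the trichotomy ``$\Tmc_w\models C\not\,\not\sqsubseteq D$'', ``$\Tmc_w\not\models C\not\,\not\sqsubseteq D$'', ``$\meP(w)=0$'' exhausts all cases.

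The main obstacle, as in the sceptical direction, is building a witness ME-model that attains this bound, since this is what upgrades the supremum to a maximum and turns the inequality into an equality. Concretely, for every $w\in\logicInts$ with $\meP(w)>0$ I would choose a possible world $\Imc_w$ with $\logicInt^{\Imc_w}=w$, $\Imc_w\models\Tmc_w$, and, whenever $\Tmc_w\not\models C\not\,\not\sqsubseteq D$, additionally $C^{\Imc_w}\subseteq D^{\Imc_w}$; the existence of such an $\Imc_w$ in the second case is exactly the negation of strong non-subsumption, while in the first case any model of $\Tmc_w$ (which exists by ME-consistency) suffices. Then $\Pmc^*=(\Imf^*,P_{\Imf^*})$ with $\Imf^*=\{\Imc_w\mid \meP(w)>0\}$ and $P_{\Imf^*}(\Imc_w)=\meP(w)$ is, by construction, consistent with $\Tmc$ and induces $\meP$, so it is an ME-\plalc-model. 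A direct calculation shows that its numerator equals $\sum_{w\models\kappa,\,\Tmc_w\not\models C\not\,\not\sqsubseteq D}\meP(w)$, so the supremum is attained and the claimed equality follows.
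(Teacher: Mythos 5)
Your proof is correct, and it reaches the result by a route that differs from the paper's in one half. The witness construction is essentially identical to the paper's: for each $w$ with $\meP(w)>0$ pick a model of $\Tmc_w$ that additionally satisfies $C\sqsubseteq D$ whenever $\Tmc_w\not\models C\not\,\not\sqsubseteq D$ (which exists precisely because strong non-subsumption fails), weight it by $\meP(w)$, and check that the resulting interpretation is an ME-model attaining the bound. Where you diverge is the matching upper bound: the paper argues by contradiction, invoking an auxiliary lemma that any ME-model can be replaced by a \emph{pithy} one (at most one possible world per \logic-interpretation) with a larger or equal conditional probability, and then deriving a contradiction world-by-world against the witness. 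You instead bound $\propSub_\Pmc(C\sqsubseteq D\mid\kappa)$ directly for an \emph{arbitrary} ME-model by partitioning $\Imf$ according to the propositional part $w$, observing that the block for $w$ contributes $0$ when $\Tmc_w\models C\not\,\not\sqsubseteq D$ (every world in that block models $\Tmc_w$, so none satisfies $C\sqsubseteq D$) and at most $\Pinduced(w)=\meP(w)$ otherwise. This is more elementary and self-contained — it needs no pithiness lemma and no contradiction — at the cost of not by itself yielding the structural by-products the paper extracts from that lemma (the pithy-model normal form used for Corollary~\ref{cor:witness}), though your witness still shows the supremum is attained. Two small imprecisions worth tightening: ``the case contribution $0$ is \emph{precisely} $\Tmc_w\models C\not\,\not\sqsubseteq D$'' overstates what you need and what is true (a block can contribute $0$ for other reasons); only the implication from $\Tmc_w\models C\not\,\not\sqsubseteq D$ to zero contribution is required for the bound. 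And the ``trichotomy'' involving $\meP(w)=0$ is unnecessary, since $\Tmc_w\models C\not\,\not\sqsubseteq D$ versus $\Tmc_w\not\models C\not\,\not\sqsubseteq D$ is already an exhaustive dichotomy; ME-consistency is genuinely needed only in the witness construction, to guarantee $\Tmc_w$ has a model when $\meP(w)>0$.
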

\noindent
To prove these theorems, one can build two models of the KB \Kmc,
\Pmc and \Qmc such that 
$\propSub_{\Pmc}(C\sqsubseteq D\mid\kappa)$ and 
$\propSub_{\Qmc}(C\sqsubseteq D\mid\kappa)$ are those degrees expressed
by Theorems~\ref{thm:precise} and \ref{thm:precise:credulous}, respectively.
As a byproduct of these proofs, we obtain that the infimum and supremum
that define the sceptical and  the credulous degrees of belief actually correspond to minimum and maximum taken by some ME-models, 
yielding the following corollary.
\begin{corollary}
\label{cor:witness}
Let \Kmc be an \plalc KB, $C,D$ be two concepts, and $\kappa$ be a
context. There exist two
ME-models \Pmc,\Qmc of \Kmc with
$\scepbelief_\Kmc(C\sqsubseteq D\mid \kappa)=\propSub_\Pmc(C\sqsubseteq D\mid \kappa)$
and 
$\credbelief_\Kmc(C\sqsubseteq D\mid \kappa)=\propSub_\Qmc(C\sqsubseteq D\mid \kappa)$.
\end{corollary}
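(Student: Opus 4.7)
The plan is to observe that the corollary is an essentially immediate byproduct of the constructions underlying Theorems~\ref{thm:precise} and~\ref{thm:precise:credulous}: each of those proofs must exhibit a concrete ME-model whose conditional probability for $C\sqsubseteq D$ given $\kappa$ equals the closed-form expression appearing on its right-hand side, and those very models witness the extrema required here.

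For the sceptical bound I would construct $\Pmc$ as follows. By ME-consistency of $\Kmc$ together with the earlier characterization theorem, for every $w\in\logicInts$ with $\meP(w)>0$ the restricted TBox $\Tmc_w$ is a consistent \ALC TBox. Pick a single possible world $\Imc_w$ with $\logicInt^{\Imc_w}=w$ that is a model of $\Tmc_w$, chosen so that $C^{\Imc_w}\not\subseteq D^{\Imc_w}$ whenever this is possible---which, by the definition of subsumption, is exactly when $\Tmc_w\not\models C\sqsubseteq D$. Set $\Imf:=\{\Imc_w\mid \meP(w)>0\}$, $P_\Imf(\Imc_w):=\meP(w)$, and $\Pmc:=(\Imf,P_\Imf)$. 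Then $P^\Pmc=\meP$, so $\Pmc\in\meModels(\Kmc)$, and the $\Imc_w$ with $w\models\kappa$ that additionally satisfy $C\sqsubseteq D$ are precisely those for which $\Tmc_w\models C\sqsubseteq D$. Plugging this into~\eqref{eq:DefCondProbSubsumpt} reproduces the formula of Theorem~\ref{thm:precise}, so $\propSub_\Pmc(C\sqsubseteq D\mid\kappa)=\scepbelief_\Kmc(C\sqsubseteq D\mid\kappa)$.

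The credulous case is dual. For each $w$ with $\meP(w)>0$ I would pick a model $\Imc'_w$ of $\Tmc_w$ with propositional part $w$ such that $C^{\Imc'_w}\subseteq D^{\Imc'_w}$ whenever possible; the failure of strong non-subsumption, $\Tmc_w\not\models C\not\,\not\sqsubseteq D$, is precisely the assertion that such a positive model exists. Assembling $\Qmc$ from these $\Imc'_w$ with the same weights $\meP(w)$ yields an ME-model whose worlds failing $C\sqsubseteq D$ are exactly those with $\Tmc_w\models C\not\,\not\sqsubseteq D$, matching the formula of Theorem~\ref{thm:precise:credulous}.

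The only real point to verify is the legitimacy of these pointwise selections, and that reduces to three standard facts already at our disposal: (i)~ME-consistency provides some model of each relevant $\Tmc_w$; (ii)~$\Tmc_w\not\models C\sqsubseteq D$ provides a model of $\Tmc_w$ with $C^\Imc\not\subseteq D^\Imc$; (iii)~$\Tmc_w\not\models C\not\,\not\sqsubseteq D$ provides a model of $\Tmc_w$ with $C^\Imc\subseteq D^\Imc$. No new analytic argument is required beyond this bookkeeping, so the main obstacle is really notational---being careful to select exactly one possible world per context so that $P_\Imf$ is a genuine probability distribution on a finite $\Imf$, and then checking that the resulting sum in~\eqref{eq:DefCondProbSubsumpt} matches, term by term, the expressions in the two preceding theorems.
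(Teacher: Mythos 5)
Your proposal is correct and matches the paper's own justification: the witnessing models are exactly the interpretations $\Pmc_\Kmc$ built in the proofs of Theorems~\ref{thm:precise} and~\ref{thm:precise:credulous} (one world per context $w$ with $\meP(w)>0$, weighted by $\meP(w)$, chosen to violate or satisfy the inclusion whenever the restricted TBox permits), and the paper explicitly derives the corollary as a byproduct of those constructions. The appendix additionally offers an equivalent route via pithy models and the finiteness of the set of attainable probabilities, but your argument is the one the paper itself emphasizes and it is complete.
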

The direct consequence of Theorems \ref{thm:precise} and \ref{thm:precise:credulous} is that 
if we want to compute the belief interval
for $C\sqsubseteq D$ given some context, it suffices to identify all
\mbox{\logic-}interpretations whose induced (classical) TBoxes entail the subsumption
relation $C\sqsubseteq D$ (for the sceptical belief) or the strong non-subsumption
$C\not\,\not\sqsubseteq D$ (for credulous belief). Recall that every set of propositional
interpretations can be represented by a propositional formula. This motivates
the following definition.
\begin{definition}[consequence formula]
\label{def:cf}
An \logic-formula $\Formula$ is a \emph{consequence formula} for 
$C\sqsubseteq D$ (respectively $C\not\,\not\sqsubseteq D$) w.r.t.\ the \logic-TBox \Tmc
if for every $w\in\logicInts$ it holds that 
$w\models\phi$ iff $\Tmc_w\models C\sqsubseteq D$ 
(respectively $\Tmc_w\models C\not\,\not\sqsubseteq D$).
\end{definition}
%
%
If we are able to compute these consequence formulas, then the computation of the 
belief interval can be reduced to the evaluation of the probability of these 
formulas w.r.t.\ the ME\mbox{-}distribution satisfying \probcons. 
%
\begin{restatable}{theorem}{consform}
\label{thm:cf}
Let $\Kmc=(\probcons,\Tmc)$ be an \plalc KB, $\Formula$ and $\altFormula$ be
consequence formulas for $C\sqsubseteq D$ and $C\not\,\not\sqsubseteq D$
w.r.t.\ \Tmc, respectively, and $\kappa$ a context. Then 
$\scepbelief_\Kmc(C\sqsubseteq D\mid\kappa)=\meP(\Formula\mid\kappa)$ and 
$\credbelief_\Kmc(C\sqsubseteq D\mid\kappa)=1-\meP(\altFormula\mid\kappa)$.
\end{restatable}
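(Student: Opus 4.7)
The plan is to derive both equalities as direct corollaries of Theorems~\ref{thm:precise} and \ref{thm:precise:credulous}, using the definition of consequence formula to rewrite the sums that appear there as probabilities of propositional formulas under $\meP$. As in those theorems, I implicitly assume $\meP(\kappa)>0$ so that all conditional probabilities are well-defined.

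For the sceptical bound, I would start from the identity
\begin{equation*}
\scepbelief_\Kmc(C\sqsubseteq D\mid \kappa)=
\frac{\sum_{w\in\logicInts,\ \Tmc_w\models C\sqsubseteq D,\ w\models\kappa}\meP(w)}{\meP(\kappa)}
\end{equation*}
provided by Theorem~\ref{thm:precise}. By Definition~\ref{def:cf}, the condition $\Tmc_w\models C\sqsubseteq D$ is equivalent to $w\models\Formula$, so the indexing set in the numerator is precisely $\{w\in\logicInts \mid w\models \Formula\wedge\kappa\}$. Hence the numerator equals $\meP(\Formula\wedge\kappa)$ by the definition of the extension of a probability distribution to formulas, and dividing by $\meP(\kappa)$ yields $\meP(\Formula\mid\kappa)$.

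For the credulous bound, the argument is entirely dual. Starting from
\begin{equation*}
\credbelief_\Kmc(C\sqsubseteq D\mid\kappa)=
1-\frac{\sum_{w\in\logicInts,\ \Tmc_w\models C\not\,\not\sqsubseteq D,\ w\models\kappa}\meP(w)}{\meP(\kappa)}
\end{equation*}
from Theorem~\ref{thm:precise:credulous}, I use the characterisation $\Tmc_w\models C\not\,\not\sqsubseteq D$ iff $w\models\altFormula$ from Definition~\ref{def:cf}. The numerator of the fraction then becomes $\meP(\altFormula\wedge\kappa)$, and the quotient becomes $\meP(\altFormula\mid\kappa)$, giving the stated identity $1-\meP(\altFormula\mid\kappa)$.

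There is essentially no obstacle here: once Theorems~\ref{thm:precise} and \ref{thm:precise:credulous} and the notion of consequence formula are in place, the proof is a bookkeeping step that rewrites a sum over models satisfying a semantic condition as a sum over models satisfying a syntactic equivalent. The only subtlety worth noting is the implicit well-definedness assumption $\meP(\kappa)>0$, which is inherited from the theorems being invoked and from the conditional probability notation on the right-hand side.
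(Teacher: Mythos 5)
Your proof is correct and follows exactly the paper's argument: both derive the result by rewriting the sums in Theorems~\ref{thm:precise} and~\ref{thm:precise:credulous} using Definition~\ref{def:cf}, turning the semantic condition $\Tmc_w\models C\sqsubseteq D$ (resp.\ $\Tmc_w\models C\not\,\not\sqsubseteq D$) into satisfaction of $\Formula$ (resp.\ $\altFormula$). Your explicit remark about the implicit assumption $\meP(\kappa)>0$ is a reasonable addition that the paper leaves tacit.
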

\begin{example}
In our running example, one can see that
a consequence formula for 
$\exists\conc{sf}.\conc{strep}\sqsubseteq\exists\conc{suc}.\conc{ab}$ is
$\neg\prop{res}\land\neg\prop{h}$. Indeed, in order to deduce
this consequence it is necessary to satisfy the first axiom
of \Texa, which is only guaranteed in the context
$\neg\prop{res}\land\neg\prop{h}$. 
Similarly, $\prop{res}$ is a consequence formula for 
$\exists\conc{sf}.\conc{strep}\not\,\not\sqsubseteq\exists\conc{suc}.\conc{ab}$.
Knowing both the consequence formulas and the ME-model, we can deduce
$$\scepbelief_\Kexa(\exists\conc{sf}.\conc{strep} \sqsubseteq \exists\conc{suc}.\conc{ab}\mid \top)=
	\meP(\neg\prop{res}\land\neg\prop{h})=0.9405
$$
and
$$\credbelief_\Kexa(\exists\conc{sf}.\conc{strep} \sqsubseteq \exists\conc{suc}.\conc{ab}\mid \prop{h})=
	1-\meP(\prop{res}\mid\prop{h})=0.2.
$$
\end{example}
In particular, Theorem~\ref{thm:cf} implies that the belief interval can be computed in two phases. 
The first phase uses purely logical reasoning to compute the consequence formulas,
while the second phase applies probabilistic inferences to compute the degrees of
belief from these formulas. We now briefly explain how the consequence formulas
can be computed.

Notice first that subsumption and non-subsumption are monotonic consequences
in the sense of~\cite{BaKP-JWS12}; that is, 
if an \ALC TBox \Tmc entails the subsumption $C\sqsubseteq D$, then every
superset of \Tmc also entails this consequence. Similarly, adding more axioms to
a TBox entailing $C\not\,\not\sqsubseteq D$ does not remove this entailment.
Moreover, the set of all \mbox{\logic-}formulas (modulo logical equivalence) forms a
distributive lattice ordered by generality, in which \logic-interpretations are
all the join prime elements. Thus, the consequence formulas from 
Definition~\ref{def:cf} are in fact the so-called \emph{boundaries} 
from~\cite{BaKP-JWS12}. Hence, they can be computed using any of the known
boundary computation approaches. 

Assuming that the number of contexts is small in comparison to
the size of the TBox, it is better to compute the degrees
of belief through a more direct approach following 
Theorems~\ref{thm:precise} and~\ref{thm:precise:credulous}. In order to compute
$\scepbelief_\Kmc(C\sqsubseteq D\mid\kappa)$ and
$\credbelief_\Kmc(C\sqsubseteq D\mid\kappa)$, it suffices to enumerate all 
interpretations $\logicInt\in\logicInts$ and check whether 
$\Tmc_\logicInt\models C\sqsubseteq D$ or 
$\Tmc_\logicInt\models C\not\,\not\sqsubseteq D$,
and $\logicInt\models\kappa$, or not (see Algorithm~\ref{alg:scepbel}).
\begin{algorithm}[tb]
\caption{Computing degrees of belief}
\label{alg:scepbel}
\begin{algorithmic}
\REQUIRE KB $\Kmc=(\probcons,\Tmc)$, concepts $C,D$, context $\kappa$
\ENSURE Belief degrees
	$\big(\scepbelief_\Kmc(C\sqsubseteq D{\mid}\kappa),\
	 \credbelief_\Kmc(C\sqsubseteq D{\mid}\kappa)\big)$
\STATE $\ell_s\gets 0$; $\ell_c\gets 0$    
\FORALL{$\logicInt\in\logicInts$} 
	\IF{$\logicInt\models\kappa$}
    	\IF{$\Tmc_\logicInt\models C\sqsubseteq D$}
        	\STATE $\ell_s\gets\ell_s+\meP(\logicInt)$
        \ELSIF{$\Tmc_\logicInt\models C\not\,\not\sqsubseteq D$}
        	\STATE $\ell_c\gets\ell_c+\meP(\logicInt)$        
        \ENDIF
    \ENDIF
\ENDFOR 
\RETURN $\left({\ell_s}/{\meP(\kappa)},1-{\ell_c}/{\meP(\kappa)}\right)$
\end{algorithmic}
\end{algorithm}
This approach requires $2^{|\sig(\logic)|}$ calls to a standard \ALC reasoner, and
each of these calls runs in exponential time on $|\Tmc|$~\cite{DoMa00}. 
Notice that this algorithm has an \emph{any-time} behaviour:
it is possible to stop its execution at any moment and obtain
an approximation of the belief interval. Moreover, the longer
the algorithm runs, the better this approximation becomes. Thus,
this method is adequate for a system where finding good 
approximations efficiently may be more important than computing
the precise answers.





\section{Properties}

We now investigate some properties of probabilistic logics
 \cite{Paris94}.
 First we show that \plalc is \emph{language and representation invariant}.
Invariance is meant with respect to logical objects. Language invariance means that just
extending the language without changing the knowledge base should not affect reasoning results. Representation invariance
means that equivalent knowledge bases should yield equal inference results.
Notice that different notions of \emph{representation dependence} exist in the
literature. For instance, in \cite{halpern2004representation} a very different notion
is considered, where the language and the knowledge base are changed simultaneously. This case is not covered by our notion of representation invariance. 
\plalc also satisfies an \emph{independence} property; i.e., reasoning 
results about a part of the language are not changed,
when we add knowledge about an independent part of the
language. Finally, \plalc is \emph{continuous} in the sense
that minor changes in the probabilistic knowledge expressed 
by a knowledge base cannot induce major changes in the
reasoning results.

\begin{restatable}[Representation invariance]{theorem}{repinv}
 Let $\Kmc_i=(\probcons_i,\Tmc_i)$, $i\in\{1,2\}$, be two
 KBs such that
 $\Models(\probcons_1) = \Models(\probcons_2)$ and
$\Models(\Tmc_1) = \Models(\Tmc_2)$.
Then 
for all concepts $C,D$ and contexts $\kappa \in \logic$,
$\belief_{\Kmc_1}(C\sqsubseteq D \mid \kappa) = \belief_{\Kmc_2}(C\sqsubseteq D \mid \kappa)$.
\end{restatable}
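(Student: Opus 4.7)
The plan is to reduce representation invariance to the characterization of the belief interval given by Theorems~\ref{thm:precise} and~\ref{thm:precise:credulous}: the endpoints depend only on (a) the ME-distribution $\meP$ over $\logicInts$, and (b) which restricted TBoxes $\Tmc_w$ entail $C\sqsubseteq D$ or $C\not\,\not\sqsubseteq D$. I would show that the two assumptions of the theorem force (a) and (b) to coincide for $\Kmc_1$ and $\Kmc_2$.

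First, I would handle the probabilistic side. The ME-model is defined as the unique maximizer of $H$ over $\Models(\probcons)$, so it is determined entirely by the set $\Models(\probcons)$ and not by any particular syntactic presentation of $\probcons$. Hence $\Models(\probcons_1)=\Models(\probcons_2)$ directly gives $\mePp{\probcons_1}=\mePp{\probcons_2}$; call this common distribution \meP.

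Next comes the main obstacle, the terminological side: I need to show that $\Models(\Tmc_1)=\Models(\Tmc_2)$ (as sets of possible worlds) implies $\Models(\Tmc_{1,w})=\Models(\Tmc_{2,w})$ (as sets of classical \ALC interpretations) for every $w\in\logicInts$. The key trick is to lift an arbitrary $\Jmc\models\Tmc_{1,w}$ to a possible world. Given any \ALC-interpretation $\Jmc=(\Delta^\Jmc,\cdot^\Jmc)$ with $\Jmc\models\Tmc_{1,w}$, define the possible world $\Imc=(\Delta^\Jmc,\cdot^\Jmc,w)$. Then for every $\gci{C\sqsubseteq D:\kappa}\in\Tmc_1$, either $w\not\models\kappa$ (so condition~(i) of Definition~\ref{def:model} applies) or $w\models\kappa$, in which case $C\sqsubseteq D\in\Tmc_{1,w}$ and so $C^\Imc\subseteq D^\Imc$. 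Thus $\Imc\in\Models(\Tmc_1)=\Models(\Tmc_2)$. Now running the same argument in reverse: for every $\gci{C\sqsubseteq D:\kappa}\in\Tmc_2$ with $w\models\kappa$ we get $C^\Imc\subseteq D^\Imc$, i.e.\ $\Jmc\models\Tmc_{2,w}$. By symmetry, $\Models(\Tmc_{1,w})=\Models(\Tmc_{2,w})$.

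Finally, since the classical \ALC entailment relations $\Tmc_{i,w}\models C\sqsubseteq D$ and $\Tmc_{i,w}\models C\not\,\not\sqsubseteq D$ depend only on $\Models(\Tmc_{i,w})$, the sets
\[
\{w\in\logicInts\mid \Tmc_{1,w}\models C\sqsubseteq D,\ w\models\kappa\}
\quad\text{and}\quad
\{w\in\logicInts\mid \Tmc_{1,w}\models C\not\,\not\sqsubseteq D,\ w\models\kappa\}
\]
agree with their $\Tmc_2$-counterparts. Plugging these equalities into the formulas of Theorems~\ref{thm:precise} and~\ref{thm:precise:credulous}, and using $\mePp{\probcons_1}=\mePp{\probcons_2}$ (so in particular $\mePp{\probcons_1}(\kappa)=\mePp{\probcons_2}(\kappa)$), we conclude $\scepbelief_{\Kmc_1}=\scepbelief_{\Kmc_2}$ and $\credbelief_{\Kmc_1}=\credbelief_{\Kmc_2}$, hence equal belief intervals. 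The degenerate case $\meP(\kappa)=0$ is treated symmetrically for both KBs since it depends only on $\probcons$, so it does not break the argument.
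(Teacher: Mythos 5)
Your proof is correct, but it takes a genuinely different route from the paper's. The paper argues directly at the level of model sets: since $\Models(\Tmc_1)=\Models(\Tmc_2)$, an \plalc-interpretation is consistent with $\Tmc_1$ iff it is consistent with $\Tmc_2$, and since $\Models(\probcons_1)=\Models(\probcons_2)$ the two constraint sets have the same ME-distribution; hence $\meModels(\Kmc_1)=\meModels(\Kmc_2)$, and the infimum and supremum defining the belief interval are taken over the very same set of models. That is essentially a two-line argument needing no auxiliary machinery. You instead route the claim through the closed-form characterizations of Theorems~\ref{thm:precise} and~\ref{thm:precise:credulous}, which forces you to prove the extra fact that $\Models(\Tmc_1)=\Models(\Tmc_2)$ implies $\Models(\Tmc_{1,w})=\Models(\Tmc_{2,w})$ for every $w\in\logicInts$; your lifting argument (embed a classical model $\Jmc$ of $\Tmc_{1,w}$ into the possible world $(\Delta^\Jmc,\cdot^\Jmc,w)$, observe it models $\Tmc_1$ and hence $\Tmc_2$, and read off $\Jmc\models\Tmc_{2,w}$) is correct and is the real content of your proof. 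What each approach buys: the paper's version is shorter and works straight from the definition of the degrees of belief, with no dependence on the computation theorems; yours makes explicit the reusable observation that semantic equivalence of $\logic$-TBoxes is preserved under restriction to every context, and it isolates exactly which semantic data the belief interval depends on (the distribution \meP together with the sets of contexts entailing the subsumption or the strong non-subsumption). Your handling of the degenerate case $\meP(\kappa)=0$ and the implicit transfer of ME-consistency between the two KBs are both adequate, so there is no gap.
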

\noindent
\plalc is not only representation invariant, but also language invariant. 
This property is of computational interest, in particular
in combination with independence, that we investigate subsequently.
To illustrate this, suppose that we added knowledge about bone fractures in
our medical example, which is independent of the knowledge about
infections. Independence guarantees that we can ignore the 
knowledge about infections when answering queries about bone
fractures. 
In this way, we can decrease the size of the knowledge base. 
Language invariance guarantees that we can also ignore 
the concepts, relations and propositional variables related to
the infection domain. Thus, we can decrease the size
of the language. Exploiting both properties, the size of 
the computational problems can sometimes be decreased significantly.
\begin{restatable}[Language Invariance]{theorem}{langinv}
Let $\Kmc_1,\Kmc_2$ be KBs over $\logic^1, \NCp{1}, \NRp{1}$ and  $\logic^2, \NCp{2}, \NRp{2}$, respectively.
If $\Kmc_1 = \Kmc_2$, $\logic^1 \subseteq \logic^2, \NCp{1} \subseteq \NCp{2}$ and $\NRp{1} \subseteq \NRp{2}$, then for all concepts $C,D \in \NCp{1}$ and contexts $\kappa \in \logic^1$, it holds that
$$\belief_{\Kmc_1}(C\sqsubseteq D \mid \kappa) = \belief_{\Kmc_2}(C\sqsubseteq D \mid \kappa).$$
\end{restatable}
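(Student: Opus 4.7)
The plan is to reduce the claim to Theorem~\ref{thm:cf}, which expresses the sceptical and credulous degrees as $\meP(\Formula\mid\kappa)$ and $1-\meP(\altFormula\mid\kappa)$ for consequence formulas $\Formula,\altFormula$. I will establish (a) that a consequence formula for $C\sqsubseteq D$ with respect to the common TBox $\Tmc = \Tmc_1 = \Tmc_2$ is the same whether we view \Tmc as an $\logic^1$-TBox or as an $\logic^2$-TBox, and (b) that the two ME-distributions agree on all $\logic^1$-formulas. Applying Theorem~\ref{thm:cf} in each language then yields the equality of belief intervals.

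For (a), let $\Formula$ be a consequence formula for $C\sqsubseteq D$ w.r.t.\ \Tmc over $\logic^1$. Every labeling formula $\kappa$ occurring in \Tmc, together with $\Formula$ itself, lies in $\logic^1$, so satisfaction of these formulas by any $w \in \textit{Int}(\logic^2)$ depends only on the restriction $w|_{\sig(\logic^1)}$. Consequently $\Tmc_w = \Tmc_{w|_{\sig(\logic^1)}}$ for every $w\in\textit{Int}(\logic^2)$, and $w\models\Formula$ iff $\Tmc_{w|_{\sig(\logic^1)}}\models C\sqsubseteq D$ iff $\Tmc_w\models C\sqsubseteq D$. Thus $\Formula$ is a consequence formula over $\logic^2$ as well. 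The same argument applies to $C\not\,\not\sqsubseteq D$.

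For (b), write $P_1,P_2$ for the ME-distributions of $\probcons$ over $\textit{Int}(\logic^1)$ and $\textit{Int}(\logic^2)$, respectively, and set $n = |\sig(\logic^2)\setminus\sig(\logic^1)|$. Define
\[
Q(w) := P_1(w|_{\sig(\logic^1)})\cdot 2^{-n},\qquad w\in\textit{Int}(\logic^2).
\]
Because each constraint in \probcons only involves $\sig(\logic^1)$-formulas and the $\sig(\logic^1)$-marginal of $Q$ equals $P_1$, we have $Q\models\probcons$. A direct computation gives $H(Q) = H(P_1) + n\log 2$. Conversely, for any $P\models\probcons$ with $\sig(\logic^1)$-marginal $\bar P$, the chain-rule decomposition $H(P) = H(\bar P) + H(P\mid \bar P)$ together with the bound $H(P\mid \bar P)\leq n\log 2$ (the uniform conditional maximises conditional entropy over $2^n$ outcomes) and $H(\bar P)\leq H(P_1)$ (since $\bar P$ also models \probcons) yields $H(P)\leq H(Q)$. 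By uniqueness of the ME-distribution, $P_2 = Q$, and hence $P_2(\rho) = P_1(\rho)$ for every $\logic^1$-formula $\rho$, in particular for $\Formula\wedge\kappa$, $\altFormula\wedge\kappa$ and $\kappa$.

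The main obstacle is part (b): showing that the ME-distribution under language extension factorises as the $\sig(\logic^1)$-marginal $P_1$ times the uniform distribution over the fresh variables. This hinges on the entropy decomposition together with the tight bound $H(P\mid \bar P)\leq n\log 2$, achieved uniquely by the uniform conditional. Once (b) is in hand, combining it with (a) and applying Theorem~\ref{thm:cf} in each language yields $\scepbelief_{\Kmc_1}(C\sqsubseteq D\mid\kappa) = \scepbelief_{\Kmc_2}(C\sqsubseteq D\mid\kappa)$ and the analogous credulous identity, proving the theorem.
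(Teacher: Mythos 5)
Your proof is correct, but it takes a genuinely different route from the paper's. The paper argues directly on models: given an ME-\plalc-model of $\Kmc_1$, it extends every possible world by setting all fresh propositional variables to \emph{false} and all fresh concept/role names to $\emptyset$, and conversely restricts worlds of a model of $\Kmc_2$, claiming the conditional probability of the subsumption is preserved in both directions. That argument is short, but it is silent on the one point that actually needs care: the ME-distribution itself changes when $\sig(\logic)$ grows, and the "all fresh variables false" extension induces a distribution concentrated on those valuations, which is \emph{not} $\mePp{2}$ --- so the constructed interpretation is not literally an ME-model of $\Kmc_2$. Your proof confronts exactly this issue: step (b) shows $\mePp{2}$ factorises as $\mePp{1}$ times the uniform distribution on the fresh variables (via the chain rule $H(P)=H(\bar P)+H(P\mid\bar P)$, the bound $H(P\mid\bar P)\le n\log 2$, and uniqueness of the ME-model), hence $\mePp{1}$ and $\mePp{2}$ agree on all $\logic^1$-events; step (a) shows consequence formulas are stable under language extension; and Theorem~\ref{thm:cf} then closes the argument. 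Note that your (b) is essentially the paper's own Lemma~\ref{lemma:MEindependence} specialised to $\probcons_2=\emptyset$, so you could have cited it instead of reproving it. Two small points you leave implicit: that $\Tmc_w\models C\sqsubseteq D$ and $\Tmc_w\models C\not\,\not\sqsubseteq D$ are unaffected by enlarging $\NCp{}$ and $\NRp{}$ (standard, since models over the two DL signatures restrict/extend to one another without changing $C^\Imc,D^\Imc$), and that $\mePp{1}(\kappa)>0$ iff $\mePp{2}(\kappa)>0$ so that both belief intervals are simultaneously well defined --- both follow immediately from your (a) and (b). What your route buys is a proof that goes through the quantities the theorem is actually about (ME-conditional probabilities via Theorem~\ref{thm:cf}) rather than through a world-by-world correspondence, at the cost of invoking the entropy machinery; the paper's route is more elementary in spirit but, as written, skips the step your (b) supplies.
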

\noindent
For an \logic-TBox \Tmc, we define the \emph{signature} of \Tmc to be the 
set $\sig(\Tmc)$ of all concept names and role names appearing in \Tmc. Likewise,
$\sig(\probcons)$ is the set of all propositional variables appearing in \probcons.
The signature of a KB $\Kmc=(\probcons,\Tmc)$ is 
$\sig(\Kmc):=\sig(\probcons)\cup\sig(\Tmc)$.
\begin{restatable}[Independence]{theorem}{independence}
\label{thm:independence}
Let $\Kmc_1,\Kmc_2$ be s.t.\ $\sig(\Kmc_1)\cap\sig(\Kmc_2)=\emptyset$, 
$C,D$ be two concepts, and $\kappa$ a context 
where
$\left(\sig(C)\cup\sig(D)\cup\sig(\kappa)\right)\cap \sig(\Kmc_2)=\emptyset$.
Then $\belief(C\sqsubseteq_{\Kmc_1} D\mid\kappa)=\belief(C\sqsubseteq_{\Kmc_1\cup\Kmc_2} D\mid\kappa)$. 
\end{restatable}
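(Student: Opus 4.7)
The plan is to reduce the equality of belief intervals to the characterization given by Theorem~\ref{thm:cf}, and then to show that each of the two ingredients of that characterization --- the consequence formula and the ME-distribution --- is insensitive to adding the signature-disjoint component $\Kmc_2$. First I would invoke Language Invariance (just proved) so that both KBs may be interpreted in the common language with signature $\sig(\Kmc_1)\cup\sig(\Kmc_2)$; this eliminates any syntactic mismatch and lets me compare the two sides on the same propositional space.

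Next I would fix consequence formulas $\phi,\psi$ for $C\sqsubseteq D$ and $C\not\,\not\sqsubseteq D$ w.r.t.\ $\Tmc_1$ (so $\sig(\phi)\cup\sig(\psi)\subseteq\sig(\probcons_1)$ by construction) and verify that they remain consequence formulas w.r.t.\ $\Tmc_1\cup\Tmc_2$. For every $w\in\logicInts$, one has $(\Tmc_1\cup\Tmc_2)_w=(\Tmc_1)_w\cup(\Tmc_2)_w$. The direction $(\Tmc_1)_w\models C\sqsubseteq D\Rightarrow(\Tmc_1\cup\Tmc_2)_w\models C\sqsubseteq D$ is monotonicity. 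The converse is a standard conservative-extension argument for \ALC: given a model $\Imc$ of $(\Tmc_1)_w$ witnessing $C^\Imc\not\subseteq D^\Imc$ and any model $\Jmc$ of $(\Tmc_2)_w$ (which exists by ME-consistency of $\Kmc_1\cup\Kmc_2$), one combines them on a common domain by transporting the interpretations of the $\sig(\Tmc_2)$-symbols onto $\Delta^\Imc$; since $\sig(\Tmc_2)$ is disjoint from $\sig(\Tmc_1)\cup\sig(C)\cup\sig(D)$, neither side interferes with the other. The same construction works for strong non-subsumption, giving the dual statement for $\psi$.

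The third step is to show that for any $\logic$-formula $\chi$ with $\sig(\chi)\subseteq\sig(\probcons_1)$, $\mePp{\probcons_1\cup\probcons_2}(\chi)=\mePp{\probcons_1}(\chi)$. This follows from the factorization of the ME-distribution over disjoint signatures: the product $\mePp{\probcons_1}\otimes\mePp{\probcons_2}$ satisfies both $\probcons_1$ and $\probcons_2$ (because each constraint involves only the variables of its own signature, so its probability is determined by the corresponding marginal), its entropy equals $H(\mePp{\probcons_1})+H(\mePp{\probcons_2})$, and any distribution on the joint space satisfying $\probcons_1\cup\probcons_2$ has marginals that satisfy $\probcons_1$ and $\probcons_2$ respectively, hence entropy at most $H(\mePp{\probcons_1})+H(\mePp{\probcons_2})$ by subadditivity. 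Uniqueness of the maximum entropy solution then yields $\mePp{\probcons_1\cup\probcons_2}=\mePp{\probcons_1}\otimes\mePp{\probcons_2}$, and marginalizing out $\sig(\probcons_2)$ gives the claimed equality for $\chi$.

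Combining these pieces: since $\sig(\kappa)\cup\sig(\phi)\cup\sig(\psi)\subseteq\sig(\probcons_1)$, step~3 applied to $\phi\land\kappa$, $\psi\land\kappa$ and $\kappa$ gives $\mePp{\probcons_1\cup\probcons_2}(\phi\mid\kappa)=\mePp{\probcons_1}(\phi\mid\kappa)$ and analogously for $\psi$. Step~2 guarantees that $\phi,\psi$ serve as consequence formulas on both sides, so Theorem~\ref{thm:cf} yields equality of the sceptical and credulous degrees, and hence of the entire belief interval. The main obstacle is step~3: although the factorization of maximum-entropy distributions over disjoint constraints is classical, one must carefully justify both that the product distribution is feasible for the joint constraint set (which hinges on each constraint referring only to variables of one signature) and that subadditivity of entropy, together with the uniqueness of the ME solution, forces the product structure; step~2 is subtler than mere monotonicity but is a routine conservative-extension argument once the signatures are laid out.
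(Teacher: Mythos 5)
Your proposal is correct and follows essentially the same route as the paper: the heart of both arguments is the factorization $\meP=\mePp{1}\cdot\mePp{2}$ over disjoint signatures, which you establish exactly as in Lemma~\ref{lemma:MEindependence} (feasibility of the product, additivity of entropy on products, the independence bound, and uniqueness of the ME-model), combined with the observation that the entailments $\Tmc_w\models C\sqsubseteq D$ are unaffected by the signature-disjoint axioms of $\Kmc_2$. The only differences are cosmetic or in your favour: you route through the consequence formulas of Theorem~\ref{thm:cf} instead of summing directly via Theorems~\ref{thm:precise} and~\ref{thm:precise:credulous}, and you explicitly supply the conservative-extension argument for the direction $(\Tmc_1\cup\Tmc_2)_w\models C\sqsubseteq D\Rightarrow(\Tmc_1)_{w_1}\models C\sqsubseteq D$, which the paper's proof subsumes under ``monotonicity'' even though monotonicity only yields the converse implication.
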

\noindent
The last property we consider is continuity. One important practical
feature of continuous probabilistic logics is that they guarantee a numerically stable behaviour. That is, 
minor rounding errors due to floating-point arithmetic
will not result in major errors in the computed probabilities.
As demonstrated by Paris in~\cite{Paris94}, measuring the difference 
between probabilistic knowledge bases is subtle and is best addressed 
by comparing knowledge bases extensionally; i.e.,
with respect to their model sets.
To this end, Paris considered the Blaschke metric. 
 Formally, the \emph{Blaschke distance} $\blaschkeMetric{S_1, S_2}$ between two convex
sets $S_1, S_2$ is defined by
\begin{align*}
\inf \{\delta \in \mathbb{R} \mid \
&\forall P_1 \in S_1 \exists P_2 \in S_2: \eucMetric{P_1, P_2} \leq \delta
\ \textit{and} \\
&\forall P_2 \in S_2 \exists P_1 \in S_1:\eucMetric{P_2, P_1} \leq \delta
\}
\end{align*}
Intuitively, $\blaschkeMetric{S_1, S_2}$ is the smallest real number $d$ such that for each distribution in one of the sets,
there is a probability distribution in the other that has distance at most $d$ to the former.
We say that a sequence of knowledge bases $(\Kmc_i)$
converges to a knowledge base \Kmc
iff the classical part of each $\Kmc_i$ is equivalent to the classical part of $\Kmc$
and the probabilistic part converges to the 
probabilistic part of $\Kmc$.
Our reasoning approach behaves indeed continuously with respect to this metric. 
\begin{restatable}[Continuity]{theorem}{continuity}
Let $(\Kmc_i)$ be a convergent sequence of KBs
with limit $\Kmc$ and $\belief_{\Kmc_i}(C\sqsubseteq D\mid\kappa) = [\ell_i,u_i]$.
If $\belief_{\Kmc}(C\sqsubseteq D\mid\kappa) = [\ell,u]$,
then $(l_i)$ converges to $\ell$ and $(u_i)$ converges to $u$ (with respect to the usual 
topology on $\mathbb{R}$). 
\end{restatable}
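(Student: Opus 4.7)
The plan is to reduce the statement to the known continuity behaviour of the maximum entropy operator on the space of probabilistic constraint sets, using the consequence formula characterisation of Theorem~\ref{thm:cf} to decouple the description logic part from the probabilistic part.

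First I would exploit the assumption that $(\Kmc_i)$ converges to $\Kmc$: by definition, the classical \logic-TBox of each $\Kmc_i$ is equivalent (has the same model class) to that of $\Kmc$. Hence by Theorem~\ref{thm:cf}, a single pair of consequence formulas $\Formula$ and $\altFormula$ works simultaneously for $\Kmc$ and for every $\Kmc_i$, and
\[
\ell_i = \mePp{\probcons_i}(\Formula\mid\kappa),\qquad
u_i = 1-\mePp{\probcons_i}(\altFormula\mid\kappa),
\]
with the analogous identities for $\ell,u$ using $\meP=\mePp{\probcons}$. Thus the theorem reduces to showing that $\mePp{\probcons_i}(\Formula\mid\kappa)\to\meP(\Formula\mid\kappa)$ for an arbitrary fixed \logic-formula, and likewise for $\altFormula$.

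Next I would invoke the continuity result for the maximum entropy inference operator established by Paris~\cite{Paris94}: if $\Models(\probcons_i)\to\Models(\probcons)$ in the Blaschke metric, then the ME-models converge pointwise, i.e.\ $\mePp{\probcons_i}(\logicInt)\to\meP(\logicInt)$ for every $\logicInt\in\logicInts$. This uses strict concavity and continuity of $H$ together with compactness and convexity of the model set, which are already cited after Definition~3. From pointwise convergence on the finite set $\logicInts$, we obtain $\mePp{\probcons_i}(\Formula)\to\meP(\Formula)$ and $\mePp{\probcons_i}(\kappa)\to\meP(\kappa)$ for any \logic-formulas, simply by summing over satisfying interpretations.

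Finally, since we are in the regime where conditional probabilities are defined, $\meP(\kappa)>0$; by the previous step $\mePp{\probcons_i}(\kappa)$ is eventually bounded away from $0$, so the quotient
\[
\mePp{\probcons_i}(\Formula\mid\kappa)=
\frac{\mePp{\probcons_i}(\Formula\land\kappa)}{\mePp{\probcons_i}(\kappa)}
\]
converges to $\meP(\Formula\mid\kappa)$, and analogously for $\altFormula$. This yields $\ell_i\to\ell$ and $u_i\to u$. The main obstacle, and the point where I would be most careful, is the continuity of the ME operator under Blaschke convergence of constraint sets: this is not elementary and must be quoted from Paris, and one must check that our probabilistic constraints \eqref{eq:constraint} define closed convex model sets fitting his framework. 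Verifying that the conditioning context $\kappa$ retains positive probability along the sequence is a minor but necessary side condition; if desired it can be enforced by the consistency assumption already made on \Kmc and the fact that \Kmc is ME-consistent, propagated along the tail of the sequence.
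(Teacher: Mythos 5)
Your proof is correct and follows essentially the same route as the paper: both reduce the claim to continuity of the ME operator under Blaschke convergence of the constraint model sets (which the paper establishes itself as Lemma~\ref{ref:lemma_ME_Continuity}, adapting Paris' argument rather than merely citing it) and then transfer this convergence to the belief bounds using the fact that the classical parts are all equivalent. Your explicit appeal to Theorem~\ref{thm:cf} with a single pair of consequence formulas, and your handling of the denominator via $\meP(\kappa)>0$, in fact spell out the final step that the paper's proof leaves implicit.
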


\section{Related Work}

Relational probabilistic logical approaches can be roughly divided into those that consider probability distributions over the domain, those that consider probability
distributions over possible worlds and those that combine both ideas \cite{Halpern90}. 
Our framework belongs to the second group.
Maximum entropy reasoning in propositional probabilistic logics has been discussed
extensively, e.g., in~\cite{Paris94,Kern-Isberner00d}, and
various extensions to first-order languages have been 
considered in recent years~\cite{Kern-IsbernerL04,BarnettP08,Kern-IsbernerThimm09a,BeierleKFP15}.
In these works, the domain is restricted to a finite number of constants or bounded in the limit. We circumvent the need to do so by combining a classical first-order logic with unbounded domain with a
probabilistic logic with fixed domain.

Many probabilistic DLs have also been considered in the last
decades~\cite{LuSt-JWS08,LutzS10,KlinovP11}.
Our approach is closest to Bayesian DLs~\cite{CePe-IJCAR14,dAFL-08} and {\sc{disponte}}~\cite{RBLZ-12}. The greatest
difference with the former lies in the fact that
\plalc KBs do not require a complete specification
of the probability distribution, but only a set
of probabilistic constraints. Moreover, the previous
formalisms consider only the sceptical degree of 
belief, while we are interested in the full belief
interval. 
In contrast to {\sc{disponte}}, \plalc is capable of expressing
both, logical and probabilistic dependencies between the axioms
in a KB; in addition, {\sc{disponte}} requires all uncertainty
degrees to be assigned as mutually independent point probabilities, 
while \plalc allows for a more flexible specification.

\section{Conclusions}

We have introduced the probabilistic DL \plalc, which extends the classical DL \alc 
with the capability of expressing and reasoning about uncertain contextual knowledge
defined through the principle of maximum entropy.
Effective reasoning methods were developed using the decoupling between the
logical and the probabilistic components of \plalc KBs. We also studied the properties
of this logic in relation to other probabilistic logics.

We plan to extend this work in several directions.
First, instead of considering the ME-model, 
we could reason over all probability distributions
that satisfy our probabilistic constraints similar to
\cite{Nilsson1986AI,Lukasiewicz99probabilisticdeduction,Hansen2008125}. This will result 
in larger belief intervals in general. A smaller interval
is preferable since it corresponds to a more
precise degree of belief. However, when using
all probability distributions the size of
the interval can be a good indicator for
the variation of the possible beliefs in our query with respect to the knowledge base. 

In some applications it is also useful 
to allow more expressive propositional or relational context languages like those proposed in 
\cite{Kern-IsbernerL04,BonaCF14,BeierleKFP15,Potyka2015}.
Similarly, we can consider other DLs
for our concept language. Indeed, \ALC was chosen as a
prototypical DL for studying the basic properties of our 
framework. Including additional constructors
into the formalism should be relatively simple. In contrast,
considering other reasoning problems beyond subsumption is
less straightforward. Recall, for instance, that if an
\plalc KB \Kmc contains an inconsistent context with positive probability, then \Kmc has no models. It is thus unclear how
to handle the probability of consistency of a KB.

Practical reasoning with \plalc
can be currently performed by
combining existing ME-reasoners\footnote{https://www.fernuni-hagen.de/wbs/research/log4kr/}
with any \alc-reasoner%
\footnote{http://owl.cs.manchester.ac.uk/tools/list-of-reasoners/}
according to
Algorithm~\ref{alg:scepbel}.
Clearly, such an approach can still be further optimized.
We are working on combining the classical
and probabilistic reasoning parts in more 
sophisticated ways.

\bibliographystyle{splncs03}
\bibliography{bib}

\clearpage

\section*{Appendix: Proofs}

\consistency*
\begin{proof}
For the ``if'' direction,
let $\logicInt_1,\ldots,\logicInt_n\in\logicInts$ be all the 
\logic-interpretations such that $\meP(\logicInt_i)>0$, $1\le i\le n$.
Then, for every $i,1\le i\le n$ the induced TBox $\Tmc_{\logicInt_i}$ has a classical model 
$\Imc_i=(\Delta^{\Imc_i},\cdot^{\Imc_i})$, by assumption.
It is easy to verify that the \plalc-interpretation 
$\Pmc=(\Imf,P_\Imf)$ defined by  
\[\Imf=\{\Jmc_i=(\Delta^{\Imc_i},\cdot^{\Imc_i},\logicInt_i)\mid 1\le i\le n\}\] 
and $P_\Imf(\Jmc_i)=\meP(\logicInt_i)$ for all
$i,1\le i\le n$ is an ME-model of \Kmc. Thus, \Kmc is consistent.

Conversely, let $\Pmc=(\Imf,P_\Imf)$ be an ME-model of \Kmc. Then, for every 
$\logicInt\in\logicInts$ with $\meP(\logicInt)>0$ there is
a possible world $(\Delta^\Imc,\cdot^\Imc,\altlogicInt^\Imc)$
with $\altlogicInt^\Imc=\logicInt$. Since \Imc is a model
of \Tmc and satisfies all contexts corresponding to the GCIs in $\Tmc_\logicInt$, 
it follows that $(\Delta^\Imc,\cdot^\Imc)\models\Tmc_\logicInt$;
thus, $\Tmc_\logicInt$ must be consistent.
\qed
\end{proof}

\IVT*
\begin{proof}
Assume w.l.o.g. that $\Pmc_i=(\Imf_i,P_i)$, $i=1,2$, are such that
$\Imf_1\cap\Imf_2=\emptyset$: if there exists some $\Imc \in \Imf_1\cap\Imf_2$, 
it suffices to rename the elements in $\Delta^\Imc$ in one of the probabilistic interpretations. 
Given $\lambda \in [0,1]$, define 
$\Pmc_{\lambda}=(\Imf_1 \cup \Imf_2,P_{\lambda})$,
where 
for every $\Imc\in \Imf_1 \cup \Imf_2$,
$$P_\lambda(\Imc)=
	\begin{cases}
		(1-\lambda) P_1(\Imc) & \text{if $\Imc\in\Imf_1$} \\
        \lambda P_2(\Imc) & \text{otherwise.}
	\end{cases}
$$
$\Pmc_{\lambda}$ is consistent with $\Tmc$ since  $\Pmc_1, \Pmc_2$ are. 
We now show that $\Pmc_{\lambda}$ induces the
ME\mbox{-}model of $\Rmc$, which implies that $\Pmc_{\lambda}$ is 
an ME-\plalc-model of \Kmc. 
For all $\logicInt \in \logicInts$, we have
\begin{align*}
 \Pinducedp{\Pmc_{\lambda}}(\logicInt) 
 & {}= \sum_{\Imc \in (\Imf_1 \cup \Imf_2)|_{\logicInt}} P_{\lambda}(\Imc)\\
 & {} = \sum_{\Imc \in \Imf_1|_{\logicInt}} \hspace{-0.2cm} P_{\lambda}(\Imc)
  +\sum_{\Imc \in \Imf_2|_{\logicInt}} \hspace{-0.2cm} P_{\lambda}(\Imc) \\
  & {} = \sum_{\Imc \in \Imf_1|_{\logicInt}} (1 - \lambda) \cdot P_{1}(\Imc)
  + \sum_{\Imc \in \Imf_2|_{\logicInt}}  \lambda \cdot P_{2}(\Imc) \\
  &= (1-\lambda) \cdot  \Pinducedp{\Pmc_{1}}(\logicInt) 
  + \lambda \cdot  \Pinducedp{\Pmc_{2}}(\logicInt) 
  = \meP(\logicInt),
\end{align*}
where the last equation follows from 
$\Pinducedp{\Pmc_{1}} = \Pinducedp{\Pmc_{2}} = \meP$.
Hence, each $\Pmc_{\lambda}$ is indeed a ME-\plalc-model of \Kmc.
For the probability of our subsumption relation, we can derive similarly that
$\Pinducedp{\Pmc_{\lambda}}(\kappa) \propSub_{\Pmc_{\lambda}}(C\sqsubseteq D\mid\kappa)$ is equal to 
\begin{align*}
\sum_{\substack{\Imc\in\Imf_1\cup\Imf_2,\\ \Imc\models\kappa,\Imc\models C\sqsubseteq D}}\hspace*{-6mm} P_\lambda(\Imc) & {} = \sum_{\substack{\Imc \in \Imf_1,\\ \Imc\models\kappa, \Imc\models C\sqsubseteq D}}\hspace*{-6mm} P_{\lambda}(\Imc)
  +\sum_{\substack{\Imc \in \Imf_2,\\ \Imc\models\kappa, \Imc\models C\sqsubseteq D}}\hspace*{-6mm} P_{\lambda}(\Imc) \\
& {} = \Pinducedp{\Pmc_{\lambda}}(\kappa)\left((1-\lambda)p_1 + \lambda p_2 \right).
\end{align*}
For every $p\in[p_1,p_2]$ there exists a $\lambda_p\in[0,1]$ such that 
$p=(1-\lambda_p)p_1+\lambda_p p_2$. Using this value $\lambda_p$ we obtain that
$\propSub_{\Pmc_{\lambda_p}}(C\sqsubseteq D\mid\kappa) = p$
\qed
\end{proof}
In order to prove Theorems~\ref{thm:precise} and~\ref{thm:precise:credulous}
it is useful to consider a restricted class of
\plalc-interpretations in which each context is represented by 
at most one possible world. We call these interpretations \emph{pithy}.

\begin{definition}[pithy]
The \plalc-interpretation $\Pmc{=}(\Imf,P_\Imf)$ is \emph{pithy} if for every $w\in\logicInts$ there is at most one possible world
$(\Delta^\Imc,\cdot^\Imc,\logicInt^\Imc)\in\Imf$
with $\logicInt^\Imc=w$.
\end{definition}
As the following lemma shows, pithy models are sufficient for computing the 
sceptical and credulous degrees of belief of a conditional subsumption relation and,
by extension, the belief interval.

\begin{lemma}
\label{lem:pithy}
Let \Kmc be an \plalc KB, $C,D$ two concepts
and $\kappa \in \logic$ 
such that $\meP(\kappa)>0$.
For every \plalc-model \Pmc of \Kmc there exist pithy \plalc-models $\Qmc_1, \Qmc_2$ of \Kmc such that 
\begin{align*}
\propSub_{\Qmc_1}(C\sqsubseteq D \mid \kappa){\le} \propSub_\Pmc(C\sqsubseteq D \mid \kappa) {\le} \propSub_{\Qmc_2}(C\sqsubseteq D \mid \kappa)
\end{align*}
and $\Pinduced = \Pinducedp{\Qmc_1} = \Pinducedp{\Qmc_2}$.
\end{lemma}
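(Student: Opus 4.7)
The plan is to construct $\Qmc_1$ and $\Qmc_2$ by collapsing, for each propositional interpretation $w$, the multiset of possible worlds in $\Imf|_w$ into a single representative chosen so as to push $\propSub$ to the extreme direction we want, while keeping the induced distribution on contexts unchanged.

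Write $\Pmc=(\Imf,P_\Imf)$ and, for each $w\in\logicInts$ with $P^\Pmc(w)>0$, split the mass of $w$ as
\begin{align*}
a_w &:= \sum_{\Imc\in\Imf|_w,\ \Imc\models C\sqsubseteq D} P_\Imf(\Imc), &
b_w &:= \sum_{\Imc\in\Imf|_w,\ \Imc\not\models C\sqsubseteq D} P_\Imf(\Imc),
\end{align*}
so that $a_w+b_w=P^\Pmc(w)$. For $\Qmc_1$, for every such $w$ choose one representative $\Imc_w^{(1)}\in\Imf|_w$ with $\Imc_w^{(1)}\not\models C\sqsubseteq D$ whenever $b_w>0$, and otherwise any $\Imc_w^{(1)}\in\Imf|_w$. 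Dually, for $\Qmc_2$ pick $\Imc_w^{(2)}\in\Imf|_w$ with $\Imc_w^{(2)}\models C\sqsubseteq D$ whenever $a_w>0$, and arbitrarily otherwise. Define $\Qmc_j=(\Imf_j,Q_j)$ with $\Imf_j=\{\Imc_w^{(j)}\mid P^\Pmc(w)>0\}$ and $Q_j(\Imc_w^{(j)})=P^\Pmc(w)$ for $j\in\{1,2\}$.

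First I would verify that each $\Qmc_j$ is indeed an \plalc-model of \Kmc. Pithiness is immediate from the construction. Every representative $\Imc_w^{(j)}$ belongs to $\Imf$, so it is a model of \Tmc since \Pmc is; hence $\Qmc_j$ is consistent with \Tmc. A direct unfolding of the definition shows $\Pinducedp{\Qmc_j}(w)=Q_j(\Imc_w^{(j)})=P^\Pmc(w)$ when $P^\Pmc(w)>0$ and $0$ otherwise, giving $\Pinducedp{\Qmc_j}=\Pinduced$; since $\Pinduced\models\probcons$, so does $\Pinducedp{\Qmc_j}$. Because \Pinduced equals \meP this also ensures each $\Qmc_j$ is an ME\mbox{-}model.

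The inequalities then follow by comparing numerators term by term in \eqref{eq:DefCondProbSubsumpt}. The common denominator is $\Pinduced(\kappa)=\Pinducedp{\Qmc_j}(\kappa)$. For $\Qmc_1$, the numerator equals $\sum_{w\models\kappa,\ b_w=0} P^\Pmc(w) = \sum_{w\models\kappa,\ b_w=0} a_w \le \sum_{w\models\kappa} a_w$, which is exactly the numerator for \Pmc. For $\Qmc_2$, the numerator equals $\sum_{w\models\kappa,\ a_w>0} P^\Pmc(w) \ge \sum_{w\models\kappa} a_w$, again the numerator for \Pmc. Dividing by $\Pinduced(\kappa)$ yields the claimed chain of inequalities.

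There is no real obstacle here; the only subtlety I anticipate is bookkeeping the case $b_w=0$ (resp.\ $a_w>0$) to make sure we never pick a representative whose membership in \Imf is unavailable, and observing that the representatives inherit $\Imc\models\Tmc$ because they lie in \Imf. Everything else is routine rearrangement of sums.
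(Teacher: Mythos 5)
Your proposal is correct and rests on the same idea as the paper's own proof: collapse all possible worlds sharing a context into a single representative, chosen to violate (resp.\ satisfy) $C\sqsubseteq D$ whenever possible, which preserves the induced distribution (hence consistency with \probcons) and can only decrease (resp.\ increase) the numerator of $\propSub$. The only difference is presentational --- you build the pithy model in one shot with an explicit term-by-term comparison of the sums $a_w,b_w$, whereas the paper reaches it by iterated pairwise merges of worlds with equal contexts; your version avoids the termination bookkeeping and is, if anything, slightly cleaner.
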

\begin{proof}
Let $\Pmc=(\Imf,P_\Imf)$.
If \Pmc is already pithy, then the result holds trivially.  
Otherwise, there must exist two possible worlds $\Imc,\Jmc\in\Imf$ such that 
$v^\Imc=v^\Jmc$. There are four possible cases:
(i)~$\Imc\models C\sqsubseteq D$ and $\Jmc\models C\sqsubseteq D$;
(ii)~$\Imc\not\models C\sqsubseteq D$ and $\Jmc\not\models C\sqsubseteq D$;
(iii)~$\Imc\models C\sqsubseteq D$ and $\Jmc\not\models C\sqsubseteq D$; and
(iv)~$\Imc\not\models C\sqsubseteq D$ and $\Jmc\models C\sqsubseteq D$.

We construct a new model $\Pmc'$ by removing one of the possible worlds $\Imc,\Jmc$
and redistributing the probability according to these cases, as described next.
For the first three cases, define $\Hmf:=\Imf\setminus\{\Imc\}$ and the
probability distribution
\[
P_\Hmf(\Hmc):=
	\begin{cases}
	  P_\Imf(\Hmc) & \Hmc\not=\Jmc \\
	  P_\Imf(\Imc)+P_\Imf(\Jmc) & \Hmc=\Jmc
	\end{cases}
\]
for all $\Hmc \in \Hmf$. Then $\Pinduced = \Pinducedp{\Pmc'}$ and 
$\Pmc'=(\Hmf,P_\Hmf)$ is still an ME-model of \Kmc.
Since the denominator in \eqref{eq:DefCondProbSubsumpt} is $\meP(\kappa)$
independently of $C\sqsubseteq D$, we have by construction that 
\begin{align*}
\propSub_{\Pmc'}(C\sqsubseteq D \mid \kappa) 
&\le \propSub_\Pmc(C\sqsubseteq D \mid \kappa).
\end{align*}
The case~(iv) is symmetric to (iii), where the possible world \Jmc is removed
instead of \Imc.
Since $\Hmf\subset\Imf$, we can iteratively repeat this process until a pithy model $\Qmc_1$
is obtained.

$\Qmc_2$ can be constructed symmetrically.
\qed
\end{proof}
Notice that since $\Pinduced = \Pinducedp{\Qmc_1} = \Pinducedp{\Qmc_2}$, this 
lemma in particular
implies that for each ME-model there exists a pithy ME-model
that yields a smaller or equal probability for the subsumption relation, and dually
one that yields a larger or equal probability. 
Note also that in each pithy ME-model, 
for all $w \in \logicInts$ with $\meP(w) > 0$, 
there must be exactly one
possible world \Imc with $\logicInt^\Imc=w$ because otherwise 
$P_\Imf$ could not be a probability distribution (the elementary
events could not sum to $1$).
Moreover, 
since a pithy interpretation can contain at most
$|\logicInts|$ possible worlds and the world corresponding to
some $w \in \logicInts$ must have probability $\meP(w)$,
there is only a finite
number of probabilities that pithy models can assign to a given subsumption relation. 
Hence, the infimum and supremum that define the sceptical and  the
credulous degrees of belief actually correspond to minimum and maximum taken by some pithy ME-models.
\begin{corollary}
\label{cor:witness}
Given an \plalc KB \Kmc, two concepts $C,D$ and a context $\kappa$, there exist two
pithy ME-models \Pmc,\Qmc of \Kmc such that 
$\scepbelief_\Kmc(C\sqsubseteq D\mid \kappa)=\propSub_\Pmc(C\sqsubseteq D\mid \kappa)$
and 
$\credbelief_\Kmc(C\sqsubseteq D\mid \kappa)=\propSub_\Qmc(C\sqsubseteq D\mid \kappa)$.
\end{corollary}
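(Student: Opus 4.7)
The plan is to leverage Lemma~\ref{lem:pithy} together with the observation (made in the paragraph preceding the corollary) that the set of probabilities assigned to a fixed subsumption relation by pithy ME-models is \emph{finite}, so that the infimum and supremum in the definitions of $\scepbelief_\Kmc$ and $\credbelief_\Kmc$ are attained.

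First I would restrict attention to pithy ME-models. By Lemma~\ref{lem:pithy}, for every ME-\plalc-model $\Pmc'$ of \Kmc there are pithy ME-models $\Qmc_1,\Qmc_2$ with $\propSub_{\Qmc_1}(C\sqsubseteq D\mid\kappa)\le\propSub_{\Pmc'}(C\sqsubseteq D\mid\kappa)\le\propSub_{\Qmc_2}(C\sqsubseteq D\mid\kappa)$. Hence
\[
\scepbelief_\Kmc(C\sqsubseteq D\mid\kappa)=\inf_{\Pmc\text{ pithy ME-model}}\propSub_\Pmc(C\sqsubseteq D\mid\kappa),
\]
and analogously for $\credbelief_\Kmc$ with $\sup$. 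Next I would describe a pithy ME-model concretely: ME-consistency together with the pithy condition forces that, for each $w\in\logicInts$ with $\meP(w)>0$, there is exactly one possible world $\Imc_w=(\Delta^{\Imc_w},\cdot^{\Imc_w},w)\in\Imf$ with $\Imc_w\models\Tmc$, and $P_\Imf(\Imc_w)=\meP(w)$; the worlds with $\meP(w)=0$ may be ignored since they contribute $0$ to every probability.

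Using the identity from the discussion after Definition~\ref{eq:DefCondProbSubsumpt} that the denominator of $\propSub_\Pmc(C\sqsubseteq D\mid\kappa)$ equals $\meP(\kappa)$ (which is fixed and positive), the value of the conditional probability is determined by the finite vector of truth values $\bigl(\mathbf{1}[\Imc_w\models C\sqsubseteq D]\bigr)_{w\models\kappa,\,\meP(w)>0}$. Since there are only finitely many such vectors, only finitely many distinct probabilities can arise across pithy ME-models, so the infimum and supremum are in fact minimum and maximum.

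To exhibit explicit witnesses, for the sceptical bound I choose, for each $w$ with $\meP(w)>0$, a possible world $\Imc_w$ such that $\logicInt^{\Imc_w}=w$, $\Imc_w\models\Tmc$, and (when $w\models\kappa$) $\Imc_w\not\models C\sqsubseteq D$ whenever such a world exists, i.e.\ whenever $\Tmc_w\not\models C\sqsubseteq D$; otherwise $\Imc_w$ is any classical model of $\Tmc_w$ with the appropriate context (which exists by ME-consistency). Setting $\Imf=\{\Imc_w\mid \meP(w)>0\}$ and $P_\Imf(\Imc_w)=\meP(w)$ yields a pithy ME-model $\Pmc$ that minimizes the numerator, hence realizes $\scepbelief_\Kmc(C\sqsubseteq D\mid\kappa)$. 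The credulous witness $\Qmc$ is built symmetrically by preferring $\Imc_w\models C\sqsubseteq D$ whenever such a model of $\Tmc_w$ with context $w$ exists, i.e.\ whenever $\Tmc_w\not\models C\not\,\not\sqsubseteq D$.

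The only delicate point, which I expect to be the main thing to argue carefully, is the assertion that ``there exists a possible world $\Imc_w\models\Tmc$ with $\logicInt^{\Imc_w}=w$ and $\Imc_w\not\models C\sqsubseteq D$ whenever $\Tmc_w\not\models C\sqsubseteq D$'' (and the dual for the credulous case). This follows because $\Imc_w\models\Tmc$ with $\logicInt^{\Imc_w}=w$ amounts to the classical reduct $(\Delta^{\Imc_w},\cdot^{\Imc_w})$ being a model of $\Tmc_w$: the non-contextual part of the interpretation is entirely unconstrained outside $\Tmc_w$, so any classical countermodel of $C\sqsubseteq D$ for $\Tmc_w$ can be lifted to a possible world with context $w$.
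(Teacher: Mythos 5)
Your proposal is correct and follows essentially the same route as the paper: it combines Lemma~\ref{lem:pithy} with the finiteness observation that the paper states in the paragraph preceding the corollary, and the explicit witness constructions you add are exactly the interpretations $\Pmc_\Kmc$ built in the paper's proofs of Theorems~\ref{thm:precise} and~\ref{thm:precise:credulous}. Your closing remark that a possible world with context $w$ models $\Tmc$ iff its classical reduct models $\Tmc_w$ is the same justification the paper uses implicitly when it asserts that the required countermodels ``must exist by definition.''
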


\sceptical*
\begin{proof}
For every $w\in\logicInts$, we construct an \plalc-interpretation $\Imc_w$ as follows. 
If $\Tmc_w\models C\sqsubseteq D$,
then $\Imc_w$ is any model $(\Delta^{\Imc_w},\cdot^{\Imc_w},w)$ of $\Tmc_w$; otherwise,
$\Imc_w$ is any model $(\Delta^{\Imc_w},\cdot^{\Imc_w},w)$ of $\Tmc_w$ that does not 
satisfy $C\sqsubseteq D$, which must exist by definition.
Let now $\Pmc_\Kmc=(\Imf,P_\Imf)$ be the
\plalc-interpretation such that $\Imf=\{\Imc_w\mid w\in\logicInts\}$
and $P_\Imf(\Imc_w)=P_{ME}(w)$ for all $w$. Then $\Pmc_\Kmc$ is a model of \Kmc.
Moreover, it holds that 
\begin{align*}
\propSub_{\Pmc_\Kmc}(C\sqsubseteq D\mid \kappa) &{} = \sum_{\Imc_w\models C\sqsubseteq D,w\models\kappa} P_\Imf(\Imc_w) / \meP(\kappa) \\
		&	{} =  \sum_{\Tmc_w\models C\sqsubseteq D}\meP(w) / \meP(\kappa).
\end{align*}
Thus, $\meP(\kappa)\scepbelief_\Kmc(C\sqsubseteq D\mid\kappa) \le
	\sum_{\Tmc_w\models C\sqsubseteq D,w\models\kappa}\meP(w)$. 
If this 
inequality is strict, then w.l.o.g. there must exist a pithy probabilistic model 
$\Pmc=(\Jmf,P_\Jmf)$ of \Kmc such that
$\propSub_\Pmc(C\sqsubseteq D\mid\kappa)< \propSub_{\Pmc_\Kmc}(C\sqsubseteq D\mid\kappa)$ 
(see Lemma~\ref{lem:pithy}). Hence for every $w\in\logicInts$ with
$\meP(w)>0$ there exists exactly one $\Jmc_w\in\Jmf$ with $v^{\Jmc_\Wmc}=w$.
We thus have
\[
\sum_{\Jmc_w\models C\sqsubseteq D}P_\Jmf(\Jmc_w) <
	\sum_{\Imc_w\models C\sqsubseteq D}P_\Imf(\Imc_w).
\]
Since $P_\Imf(\Imc_w)=P_\Jmf(\Jmc_w)$ for all $w$, then there must exist a valuation $v$ such that 
$\Imc_v\models C\sqsubseteq D$ but $\Jmc_v\not\models C\sqsubseteq D$. Since
$\Jmc_v$ is a model of $\Tmc_v$ it follows that $\Tmc_v\not\models C\sqsubseteq D$. 
By construction,
then we have that $\Imc_v\not\models C\sqsubseteq D$, which is a contradiction.
\qed
\end{proof}

\credulous*
\begin{proof}
For every $w\in\logicInts$, construct an \plalc-in\-ter\-pre\-tation $\Imc_w$ as follows. 
If $\Tmc_w\models C\not\,\not\sqsubseteq D$,
then $\Imc_w$ is any model $(\Delta^{\Imc_w},\cdot^{\Imc_w},w)$ of $\Tmc_w$; otherwise,
$\Imc_w$ is any model $(\Delta^{\Imc_w},\cdot^{\Imc_w},w)$ of $\Tmc_w$ that 
satisfies $C\sqsubseteq D$.
Let $\Pmc_\Kmc=(\Imf,P_\Imf)$ be the
\plalc-interpretation with $\Imf=\{\Imc_w\mid w\in\logicInts\}$
and $P_\Imf(\Imc_w)=P_{ME}(w)$ for all $w$. Then $\Pmc_\Kmc$ is a model of \Kmc.
Moreover, it holds that 
\begin{align*}
\propSub_{\Pmc_\Kmc}(C\sqsubseteq D\mid\kappa) &{} = \sum_{\Imc_w\models C\sqsubseteq D, w\models\kappa} P_\Imf(\Imc_w)/\meP(\kappa) \\
			& {} = 1 - \sum_{\Tmc_w\models A\not\sqsubseteq B}\meP(w) / \meP(\kappa).
\end{align*}
That is, $\credbelief_\Kmc(C \sqsubseteq D \mid \kappa)\ge 1-
	\sum_{\Tmc_w\models C\sqsubseteq D,w\models k}\meP(w)/\meP(\kappa)$. 
If this
inequality is strict, then there exists a probabilistic model $\Pmc=(\Jmf,P_\Jmf)$ of \Kmc such that
$\propSub_\Pmc(C\sqsubseteq D \mid\kappa) > \propSub_{\Pmc_\Kmc}(C\sqsubseteq D\mid\kappa)$. 
By Lemma~\ref{lem:pithy}, we can assume w.l.o.g.\ that \Pmc is pithy. Hence for every $w\in\logicInts$ with
$P_{ME}(w)>0$ there is exactly one $\Jmc_w\in\Jmf$ with $v^{\Jmc_\Wmc}=w$, and
thus,
\[
\sum_{\Jmc_w\models C\sqsubseteq D}P_\Jmf(\Jmc_w) >
	\sum_{\Imc_w\models C\sqsubseteq D}P_\Imf(\Imc_w).
\]
Since $P_\Imf(\Imc_w)=P_\Jmf(\Jmc_w)$ for all $w$, there must exist some $v\in\logicInts$ such that 
$\Imc_v\not\models C\sqsubseteq D$ but $\Jmc_v\models C\sqsubseteq D$. As
$\Jmc_v$ is a model of $\Tmc_v$, $\Tmc_v\not\models C\not\,\not\sqsubseteq D$. 
By construction,
we have that $\Imc_v\models C\sqsubseteq D$, which is
a contradiction.
\qed
\end{proof}

\consform*
\begin{proof}
The result is a direct consequence of Definition~\ref{def:cf}
and Theorems~\ref{thm:precise} and~\ref{thm:precise:credulous}. Indeed,
\begin{align*}
\scepbelief_\Kmc(C\sqsubseteq D\mid\kappa) & =
	\sum_{\Tmc_w\models C\sqsubseteq D,w\models\kappa}\meP(w)/\meP(\kappa) \\ & =
    \sum_{w\models\Formula\land\kappa}\hspace*{-1mm}\meP(w)/\meP(\kappa) =
    \meP(\Formula\mid\kappa).
\end{align*}
The case of the credulous degree of belief is analogous.
\qed
\end{proof}

\repinv*
\begin{proof}
Let $\Pmc=(\Imf,P_\Imf)$ be an \plalc-interpretation.
Since $\Models(\Tmc_1) = \Models(\Tmc_2)$,
\Pmc is consistent with $\Tmc_1$ iff
\Pmc is consistent with $\Tmc_2$.
Since  $\Models(\probcons_1) = \Models(\probcons_2)$,
$\probcons_1$ and $\probcons_2$ induce the same ME-model
and $\Pmc_1$ is (ME-)consistent with \probcons iff $\Pmc_2$ is (ME-)consistent with \probcons. Hence, 
\begin{align*}
\scepbelief_{\Kmc_1}(C\sqsubseteq D  \mid \kappa) 
&= \inf_{\Pmc \in \meModels(\Kmc_1)}\propSub_\Pmc(C\sqsubseteq D \mid \kappa) \\
&= \inf_{\Pmc \in \meModels(\Kmc_2)}\propSub_\Pmc(C\sqsubseteq D \mid \kappa) \\
&= \scepbelief_{\Kmc_2}(C\sqsubseteq D  \mid \kappa)
\end{align*}
Analogously, we get that 
$\credbelief_{\Kmc_1}(C\sqsubseteq D  \mid \kappa) 
= \credbelief_{\Kmc_2}(C\sqsubseteq D  \mid \kappa)$
and therefore 
$\belief_{\Kmc_1}(C\sqsubseteq D \mid \kappa) = \belief_{\Kmc_2}(C\sqsubseteq D \mid \kappa)$.
\qed
\end{proof}

\langinv*
\begin{proof}
It suffices to show that for every \plalc-model $\Pmc_1$
of $\Kmc_1$ there exists a \plalc-model $\Pmc_2$
of $\Kmc_2$ such that
$\propSub_{\Pmc_1}(C\sqsubseteq D \mid \kappa)
= \propSub_{\Pmc_2}(C\sqsubseteq D \mid \kappa)$
and \emph{vice versa}.
Given an \plalc-model $\Pmc_1=(\Imf_1,P_{\Imf_1})$
of $\Kmc_1$, we build  $\Pmc_2=(\Imf_2,P_{\Imf_2})$
as follows. For each possible world $\Imc \in \Imf_1$ with probability $p$,
$\Imf_2$ contains a possible world $\Imc'$ with probability $p$ that extends \Imc 
assigning $\textit{false}$ to all new propositional variables
and the empty set to all new role names and concept names. 
Since $C, D, \kappa$ and $\Kmc_2$ depend only on $\sig(\logic^1), \NCp{1}, \NRp{1}$,
$\Pmc_2$ satisfies $\Kmc_2$ and
$\propSub_{\Pmc_1}(C\sqsubseteq D \mid \kappa)
= \propSub_{\Pmc_2}(C\sqsubseteq D \mid \kappa)$
holds. Conversely, consider an \plalc-model $\Pmc_2=(\Imf_2,P_{\Imf_2})$ of $\Kmc_2$. 
We obtain  $\Pmc_1$ from $\Pmc_2$ by restricting the possible 
worlds in $\Imf_2$ to $C, D, \kappa$. As before,
it follows that $\Pmc_1$ satisfies $\Kmc_1$ and
$\propSub_{\Pmc_1}(C\sqsubseteq D \mid \kappa)
= \propSub_{\Pmc_2}(C\sqsubseteq D \mid \kappa)$.
\qed
\end{proof}
In order to prove Independence, we need the following lemma.
It states an independence property of ME-distributions over our
context language.
\begin{lemma}[ME-independence]
\label{lemma:MEindependence}
Let $\probcons_1,\probcons_2$ be two finite sets of probability
constraints such that $\sig(\logic_1)\cap\sig(\logic_2)=\emptyset$, 
and let $\probcons:=\probcons_1\cup\probcons_2$. Then 
$\meP = \mePp{1} \cdot \mePp{2}$.
In particular, for the marginal distributions of $\meP$, we have 
$\meP(\logicInt_i) = \mePi(\logicInt_i)$
for all $\logicInt_i \in \logicIntsi$, $i \in \{1,2\}$.
\end{lemma}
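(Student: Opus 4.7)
The plan is to show that $\mePp{1} \cdot \mePp{2}$ is the unique maximum-entropy distribution satisfying $\probcons$, which by uniqueness will force $\meP = \mePp{1} \cdot \mePp{2}$. Since $\sig(\logic_1) \cap \sig(\logic_2) = \emptyset$, every $\logic$-interpretation $\logicInt$ decomposes uniquely as a pair $(\logicInt_1, \logicInt_2)$ with $\logicInt_i \in \logicIntsi$. Set $Q(\logicInt) := \mePp{1}(\logicInt_1) \cdot \mePp{2}(\logicInt_2)$; this is clearly a probability distribution on $\logicInts$.

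First I would verify $Q \models \probcons$. For any $\Formula \in \logic_i$, the value $Q(\Formula)$ depends only on the coordinate $\logicInt_i$ of $\logicInt$, and summing out the other coordinate yields $Q(\Formula) = \mePp{i}(\Formula)$ because $\mePp{3-i}$ is a probability distribution (sums to $1$). Hence every constraint in $\probcons_i$ is satisfied by $Q$ because it is satisfied by $\mePp{i}$, and so $Q \models \probcons_1 \cup \probcons_2 = \probcons$.

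Next I would compute $H(Q)$. Expanding $\log(\mePp{1}(\logicInt_1) \mePp{2}(\logicInt_2)) = \log \mePp{1}(\logicInt_1) + \log \mePp{2}(\logicInt_2)$ and using the marginalization observation above, a direct calculation gives $H(Q) = H(\mePp{1}) + H(\mePp{2})$. Now consider any other $P \models \probcons$ and its marginals $P|_1, P|_2$ on $\logicIntsp{1}, \logicIntsp{2}$. Since constraints in $\probcons_i$ only involve formulas over $\sig(\logic_i)$, and $P|_i(\Formula) = P(\Formula)$ for such $\Formula$, the marginal $P|_i$ satisfies $\probcons_i$. By the ME-property, $H(P|_i) \le H(\mePp{i})$. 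The key inequality I need is sub-additivity of entropy, $H(P) \le H(P|_1) + H(P|_2)$, which follows from non-negativity of the KL-divergence $D(P \,\|\, P|_1 \cdot P|_2) \ge 0$ once expanded using the same factorization trick for logarithms. Chaining these bounds yields $H(P) \le H(\mePp{1}) + H(\mePp{2}) = H(Q)$.

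Thus $Q$ attains the maximum of $H$ over $\Models(\probcons)$, and by uniqueness of the ME-distribution (stated after Definition of the ME-model) we conclude $\meP = Q = \mePp{1} \cdot \mePp{2}$. The marginal statement then follows in one line: $\meP(\logicInt_i) = \sum_{\logicInt_{3-i}} \mePp{1}(\logicInt_1) \mePp{2}(\logicInt_2) = \mePp{i}(\logicInt_i)$. The main subtlety is the sub-additivity step; although it is classical, one must be a little careful to phrase it using only the finite-sample Gibbs/KL inequality so that no additional machinery beyond what is available in the paper is invoked.
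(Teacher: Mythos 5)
Your proof is correct and follows essentially the same route as the paper: form the product distribution, check it models $\probcons$ via marginalization, compute its entropy as $H(\mePp{1})+H(\mePp{2})$, and invoke sub-additivity of entropy (the paper cites this as the ``independence bound,'' you derive it from non-negativity of KL-divergence) together with uniqueness of the ME-model. If anything, you are slightly more explicit than the paper in noting that the marginals of an arbitrary model of $\probcons$ satisfy the respective $\probcons_i$ and hence have entropy bounded by $H(\mePp{i})$, a step the paper leaves implicit.
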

\begin{proof}
Since the signatures of $\logic_i$ are disjoint,
we can denote the valuations of the language over $\sig(\logic_1)\cup\sig(\logic_2)$ by
$(\logicInt_1, \logicInt_2)$, 
$\logicInt_i \in \sig(\logic_i)$.
Let us first consider the marginals of
$P = \mePp{1} \cdot \mePp{2}$.
For all $\logicInt_1 \in \logicIntsp{1}$,
we have 
\begin{align*}
P(\logicInt_1) &= \sum_{\logicInt_2 \in \logicIntsp{2}} P(\logicInt_1, \logicInt_2)
= \mePp{1}(\logicInt_1) \sum_{\logicInt_2 \in \logicIntsp{2}} \mePp{2}(\logicInt_2)\\
&= \mePp{1}(\logicInt_1).
\end{align*}
Symmetrically, we can show that $P(\logicInt_2) = \mePp{2}(\logicInt_2)$.
This means, in particular, that the marginals
of $P$ coincide with the corresponding maximum entropy solutions.
Therefore,
\begin{align*}
H(P) 
&= \sum_{\logicInt_1}  \sum_{\logicInt_2} 
\mePp{1}(\logicInt_1) \mePp{2}(\logicInt_2)
\log (\mePp{1}(\logicInt_1) {\cdot} \mePp{2}(\logicInt_2)) \\
&=\sum_{\logicInt_2} \mePp{2}(\logicInt_2)
\sum_{\logicInt_1}  \mePp{1}(\logicInt_1) 
\log \mePp{1}(\logicInt_1) \\
&+
\sum_{\logicInt_1} \mePp{1}(\logicInt_1) \sum_{\logicInt_2}
 \mePp{2}(\logicInt_2)
\log \mePp{2}(\logicInt_2) \\
&= H(\mePp{1}) + H(\mePp{2}).
\end{align*}
Using the independence bound for entropy
(see, e.g., \cite{yeung2008information}, Theorem 2.39),
we have that
\begin{align*}
H(\meP)
\leq H(\mePp{1})
+ H(\mePp{2}) 
= H(P).
\end{align*}
Hence, it suffices to show that $\mePp{1} \cdot \mePp{2}$ is indeed a model of $\probcons_1 \cup \probcons_2$. 
But this follows immediately from the facts 
that $\mePi$ satisfies $\probcons_i$ and 
that the marginalization of $P$ over 
one logic corresponds to the ME-distribution
over the other.
\qed
\end{proof}
\independence*
\begin{proof}
Let $\Kmc_i=(\probcons_i,\Tmc_i)$ for $i\in\{1,2\}$. Since the signatures of both
KBs are disjoint, we will denote the valuations
over the set of variables $\sig(\probcons_1)\cup\sig(\probcons_2)$ as pairs
$(w_1,w_2)$, where $w_i$ is a valuation over $\sig(\probcons_i)$.
We know from Theorem~\ref{thm:precise} that
\begin{align}
P_{ME}(\kappa)\belief(C\sqsubseteq_{\Kmc_1\cup\Kmc_2} D\mid \kappa) 
& {} = \hspace*{-6mm}
	\sum_{\underset{(\Tmc_1\cup\Tmc_2)_{(w_1,w_2)}\models C\sqsubseteq D}{(w_1,w_2)\models\kappa}}\hspace*{-6mm}\meP((w_1,w_2)) \nonumber \\
    & {} = 	\sum_{\underset{(\Tmc_1)_{w_1}\models C\sqsubseteq D}{(w_1,w_2)\models\kappa}}\hspace*{-4mm}\meP((w_1,w_2)) \label{ind:mono}\\
& {} = \sum_{\underset{(\Tmc_1)_{w_1}\models C\sqsubseteq D}{w_1\models\kappa}}\meP(w_1) \label{ind:ind} \\ 
& {} = \meP(\kappa)\belief(C\sqsubseteq_{\Kmc_1} D\mid \kappa), \nonumber
\end{align}
where 
\eqref{ind:mono} follows from the monotonicity of subsumption
in \ALC TBoxes, and~\eqref{ind:ind} is a consequence of Lemma \ref{lemma:MEindependence}.
\qed
\end{proof}
In order to prove Continuity, we start with a lemma that
states continuity of ME-distributions over \logic.
The proof is analogous to Paris' proof of continuity of maximum entropy reasoning in his
probabilistic logic \cite{Paris94}, which is a sub-logic of our probabilistic logic over \logic.
\begin{lemma}[ME-continuity]
\label{ref:lemma_ME_Continuity}
Let $\probcons$ be a set of probabilistic constraints and
let $(\probcons_i)$ be a sequence of probabilistic constraints
such that $(\Models(\probcons_i))$ converges to $\Models(\probcons)$. Then the sequence $(P_i)$ of ME -Models of $\probcons_i$ converges to the ME\mbox{-}model \meP of $\probcons$.
\end{lemma}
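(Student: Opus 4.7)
The plan is a standard compactness-and-uniqueness argument. Because $\logicInts$ is finite, every probability distribution over \logic lives in the compact simplex $\Delta\subseteq\mathbb{R}^{|\logicInts|}$, so the sequence $(P_i)$ has convergent subsequences. It therefore suffices to prove that every convergent subsequence of $(P_i)$ has limit $\meP$; the whole sequence must then converge to $\meP$, since otherwise, by compactness, some subsequence would produce a limit different from $\meP$, contradicting the claim.

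Fix such a subsequence $(P_{i_k})$ with limit $P^*$. The ME-model of $\probcons$ is unique (because $H$ is strictly concave and $\Models(\probcons)$ is compact and convex, as already noted in the paper), so it is enough to verify (a) $P^*\in\Models(\probcons)$, and (b) $H(P^*)\ge H(Q)$ for every $Q\in\Models(\probcons)$.

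For (a), I would use one direction of the Blaschke convergence. Since $\blaschkeMetric{\Models(\probcons_{i_k}),\Models(\probcons)}\to 0$ and $P_{i_k}\in\Models(\probcons_{i_k})$, there exist $R_{i_k}\in\Models(\probcons)$ with $\eucMetric{P_{i_k}-R_{i_k}}\to 0$, hence $R_{i_k}\to P^*$. As $\Models(\probcons)$ is closed (being the intersection of $\Delta$ with finitely many closed affine half-spaces), we conclude $P^*\in\Models(\probcons)$. For (b), fix any $Q\in\Models(\probcons)$. The other direction of Blaschke convergence supplies $Q_i\in\Models(\probcons_i)$ with $\eucMetric{Q-Q_i}\to 0$. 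The entropy $H$ is continuous on $\Delta$ (with the standard convention $0\log 0=0$), so $H(Q_{i_k})\to H(Q)$ and $H(P_{i_k})\to H(P^*)$. Since $P_{i_k}$ is the ME-model of $\probcons_{i_k}$, we have $H(P_{i_k})\ge H(Q_{i_k})$, and passing to the limit gives $H(P^*)\ge H(Q)$. Together (a) and (b) force $P^*=\meP$, completing the proof.

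The main obstacle is the careful bookkeeping around the Blaschke-metric hypothesis: I need to ensure that approximating sequences $R_{i_k}$ and $Q_i$ can always be selected (essentially the fact that the infimum defining the Blaschke distance is attained over compact convex sets) and that $\Models(\probcons_i)$ is eventually non-empty so $P_i$ is defined for all large $i$. A minor but necessary technical point is the continuity of $H$ on the closed simplex, including its boundary, which legitimises the limit exchange in both applications of entropy continuity.
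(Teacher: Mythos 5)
Your proof is correct, but it takes a genuinely different route from the paper's. The paper gives a direct $\epsilon$--$\delta$ argument: it isolates the compact set $S$ of models of $\probcons$ at distance at least $\nicefrac{\epsilon}{2}$ from \meP, uses strict maximality to extract a positive entropy gap $\nu = \min\{H(\meP)-H(P) \mid P\in S\}$, and then uses uniform continuity of $H$ together with both directions of the Blaschke hypothesis to show $|H(\mePi)-H(\meP)|<\nicefrac{\nu}{2}$, forcing the nearby point $P\in\Models(\probcons)$ out of $S$ and hence $\mePi$ within $\epsilon$ of \meP. You instead run the soft compactness--subsequence argument (essentially the uniqueness case of Berge's maximum theorem): every subsequential limit $P^*$ of $(P_i)$ is shown to lie in $\Models(\probcons)$ (via one direction of Blaschke convergence plus closedness of the constraint set) and to maximize $H$ there (via the other direction plus continuity of $H$), so uniqueness of the maximizer gives $P^*=\meP$ and the whole sequence converges. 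Both arguments rest on the same pillars --- compactness of the simplex, continuity of $H$ on it including the boundary, uniqueness of the entropy maximizer, and both halves of the Blaschke condition --- and your bookkeeping concerns (attainability of the approximating points, eventual non-emptiness of $\Models(\probcons_i)$) are easily discharged, since the sets are compact and the lemma presupposes that each $P_i$ exists. What the paper's version buys is an explicit, quantitative $\delta(\epsilon)$, which better supports the stated motivation of numerical stability; what yours buys is brevity and the avoidance of the auxiliary quantity $\nu$ and the explicit appeal to uniform continuity.
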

\begin{proof}
For brevity, let 
$M = \Models(\probcons)$ and
$M_i = \Models(\probcons_i)$.
We show that for each $\epsilon > 0$,
there is a $\delta > 0$ such that
$\blaschkeMetric{\Kmc_i, \Kmc} < \delta$
implies that $\varMetric{\mePi, \meP} < \epsilon$.
Consider the set 
$S = \{P \in M \mid 
\varMetric{P, \meP} \geq \frac{\epsilon}{2}\}$
of models of of $\probcons$ that have at least distance $\frac{\epsilon}{2}$ to \meP.
By continuity of the euclidean distance and compactness 
of $M$, $S$ must be compact. Since the entropy function $H$ is continuous, 
the minimum $\nu = \min \{H(\meP) - H(P) \mid P \in S\}$
does exist and $\nu > 0$ by unique maximality of $\meP$.
Since $H$ is defined on a compact set (the set of probability distributions over \logic), $H$ is uniformly continuous.
Therefore there exists a $\delta > 0$ such that for all distributions 
$P_1, P_2$ over $\logic$, $\varMetric{P_1, P_2} < \delta$ implies that $H(P_1) - H(P_2) < \min \{ \frac{\epsilon}{2}, \frac{\nu}{2}\}$. In partiular, we can
assume that $\delta < \frac{\epsilon}{2}$.
Now, if $\blaschkeMetric{M_i,M} < \delta$, 
there is a $P_i \in M_i$
such that $\varMetric{P_i,\meP} < \delta$ 
and a $P \in M$
such that $\varMetric{\mePi,P} < \delta$. Hence,
\begin{align*}
&H(\meP) < 
H(P_i) + \frac{\nu}{2} \leq H(\mePi) + \frac{\nu}{2},\\
&H(\mePi) < 
H(P) + \frac{\nu}{2} \leq H(\meP) + \frac{\nu}{2}
\end{align*}
and therefore $|H(\mePi) - H(\meP)| < \frac{\nu}{2}$.
In particular,
\begin{align*}
&|H(P) - H(\meP)| \\
&\leq |H(P) - H(\mePi)|+ |H(\mePi) - H(\meP)| \\
&< \frac{\nu}{2} + \frac{\nu}{2} = \nu.
\end{align*}
By definition of $\nu$, we can conclude that $P \in M \setminus S$ and therefore 
$\varMetric{\meP, P} < \frac{\epsilon}{2}$.
Hence, 
\begin{align*}
\varMetric{\meP, \mePi} 
\leq \varMetric{\meP, P} + \varMetric{P, \mePi}
< \frac{\epsilon}{2} + \delta  < \epsilon. \quad 
\end{align*}
\qed
\end{proof}

\continuity*
\begin{proof}
Let $\Kmc=(\probcons,\Tmc)$ and $\Kmc_i=(\probcons_i,\Tmc_i)$.
By assumption, $(\Models(\probcons_i))$ converges to $\Models(\probcons)$.
Hence, Lemma \ref{ref:lemma_ME_Continuity} implies that 
the probability distributions induced by ME-models of $\Kmc_i$ converge to \meP, which, in turn, is the probability distribution induced by all ME-models of $\Kmc$.
Hence, infimum and supremum of the conditonal probability 
of $C\sqsubseteq D$ given $\kappa$ with respect to $\Kmc_i$ 
will converge to infimum and supremum with respect to $\Kmc$.
\qed
\end{proof}

\end{document}